\documentclass{article} % For LaTeX2e
\usepackage{iclr2026_conference,times}

\usepackage{amsmath,amsfonts,bm}

\def\eqref#1{equation~\ref{#1}}
\def\1{\bm{1}}

\DeclareMathAlphabet{\mathsfit}{\encodingdefault}{\sfdefault}{m}{sl}
\SetMathAlphabet{\mathsfit}{bold}{\encodingdefault}{\sfdefault}{bx}{n}

\usepackage{url}

\usepackage{ulem}
\usepackage{algorithm}
\usepackage{amsthm}
\usepackage{algorithmic}
\usepackage[utf8]{inputenc} % allow utf-8 input
\usepackage[T1]{fontenc}    % use 8-bit T1 fonts
\usepackage{booktabs}       % professional-quality tables
\usepackage{amsfonts}       % blackboard math symbols
\usepackage{nicefrac}       % compact symbols for 1/2, etc.
\usepackage{microtype}      % microtypography
\usepackage{xcolor}         % colors
\usepackage{color}
\usepackage{graphicx}
\usepackage{amssymb}
\usepackage{ulem}
\usepackage{tcolorbox}
\usepackage{xltabular}
\usepackage{longtable}
\usepackage{epigraph}
\usepackage{pifont}
\usepackage{wrapfig}
\usepackage{xspace}
\usepackage{colortbl}
\usepackage{multirow}
\usepackage{multicol}
\usepackage{caption}
\usepackage{subcaption}
\usepackage{makecell}
\usepackage{tabularx}
\usepackage{enumitem}
\usepackage{amsmath}
\usepackage{hyperref}
\tcbuselibrary{skins}
\usepackage{bbm}

\definecolor{cblue}{rgb}{0.21,0.49,0.74}
\definecolor{maroon}{RGB}{0,128,128}

\definecolor{metabg}{HTML}{F1F4F7}

\definecolor{good}{HTML}{941100}
\definecolor{bad}{HTML}{011993}
\definecolor{normal}{HTML}{009051}

\definecolor{darkgreen}{rgb}{0.0, 0.5, 0.0}
\definecolor{darkred}{rgb}{0.5, 0.0, 0.0}
\definecolor{kellygreen}{rgb}{0.3, 0.73, 0.09}
\definecolor{alizarin}{rgb}{0.82, 0.1, 0.26}
\newcommand{\cmark}{{\color{kellygreen} \ding{51}}}
\newcommand{\xmark}{{\color{alizarin} \ding{55}}}

\definecolor{uclablue}{rgb}{0.15, 0.45, 0.68}
\hypersetup{
    breaklinks,
    citecolor=uclablue,
    colorlinks=true,
    linkcolor=uclablue
}

\newtcolorbox{prompt}[1]{
    enhanced,
    drop shadow=black!5!white,
    left=4mm,
    right=4mm,
    top=2mm,
    bottom=2mm,
    boxsep=0mm,
    rounded corners,
    title=#1,
    fontupper=\footnotesize\linespread{0.9}\fontfamily{lmr}\selectfont,
}

\newcommand{\ours}{\textsc{OPRM}\xspace}

\newcommand{\github}{\raisebox{-1.5pt}{\includegraphics[height=1em]{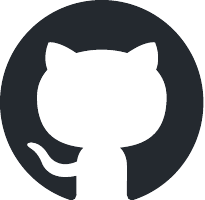}}}

\title{Learning Ordinal Probabilistic Reward \\ from Preferences}

\iclrfinalcopy
\author{%
\textbf{Longze Chen$^{1,2}$ \quad Lu Wang$^{3}$ \quad Renke Shan$^{3}$ \quad Ze Gong$^{1}$\thanks{Corresponding authors.} \quad Run Luo$^{1,2}$ \quad Jiaming Li$^{1,2}$} \\
\textbf{Jing Luo$^{1}$ \quad Qiyao Wang$^{1,2}$ \quad Min Yang$^{1,4}$\footnotemark[1]} \\
$^1$ Shenzhen Institutes of Advanced Technology, Chinese Academy of Sciences \\
$^2$ University of Chinese Academy of Sciences \quad
$^3$ Ritzz-AI \\
$^4$ Shenzhen University of Advanced Technology \\
\texttt{\{lz.chen2, ze.gong, min.yang\}@siat.ac.cn} \\
\github\ \texttt{\url{https://github.com/ritzz-ai/OPRM}}
}
\begin{document}

\maketitle

\begin{abstract}\label{sec:abs}
% [Question]
Reward models are crucial for aligning large language models (LLMs) with human values and intentions.
Existing approaches follow either Generative (GRMs) or Discriminative (DRMs) paradigms, yet both suffer from limitations:
GRMs typically demand costly point-wise supervision, while DRMs produce uncalibrated relative scores that lack probabilistic interpretation.
% [How to solve]
To address these challenges, we introduce a novel reward modeling paradigm: \textit{\textbf{P}robabilistic \textbf{R}eward \textbf{M}odel} (PRM).
Instead of modeling reward as a deterministic scalar, our approach treats it as a random variable, learning a full probability distribution for the quality of each response.
To make this paradigm practical, we present its closed-form, discrete realization: the \textit{\textbf{O}rdinal \textbf{P}robabilistic \textbf{R}eward \textbf{M}odel} (OPRM), which discretizes the quality score into a finite set of ordinal ratings.
% [RgFT] 
Building on OPRM, we propose a data-efficient training strategy called \textit{\textbf{R}e\textbf{g}ion \textbf{F}looding \textbf{T}uning} (RgFT).
It enables rewards to better reflect absolute text quality by incorporating quality-level annotations, which guide the model to concentrate the probability mass within corresponding rating sub-regions.
% [Results]
Experiments on various reward model benchmarks show that our method improves accuracy by \textbf{2.9\%$\sim$7.4\%} compared to prior reward models, demonstrating strong performance and data efficiency.
Analysis of the score distribution provides evidence that our method captures not only relative rankings but also absolute quality.
\end{abstract}

\section{Introduction}\label{sec:intro}
% ===============================================================================================================================================================

% [Background: from rlhf to reward model]
Reinforcement Learning from Human Feedback (RLHF) has emerged as a pivotal technique for aligning Large Language Models (LLMs) with human values and intentions~\citep{achiam-2023-gpt4, ouyang-2022-training}.
As a critical component of the RLHF process~\citep{bai-2022-training}, the reward model is trained to assign scores that quantify the degree of alignment between the model's outputs and human preferences.
Recent advances~\citep{guo-2025-deepseek, lightman-2024-let} have shown that well-designed reward signals, whether applied during training or inference, can significantly enhance LLM performance across diverse domains~\citep{shao-2024-deepseekmath, luo-2025-gui, jin-2025-search, wang-2025-adaptive, li-2025-implicit}.
However, learning a reward model that can accurately capture human preference signals remains a significant challenge~\citep{gao-2023-scaling, sun-2025-rethinking, zhong-2025-comprehensive}.
Most recent efforts typically follow either the generative or discriminative paradigm, yet both approaches exhibit inherent limitations that hinder their effectiveness in practice.

% ===============================================================================================================================================================

% [Two Paradigm: discriminative & generative]
\textit{Discriminative Reward Models} (DRMs), which append an MLP-based value head to a base model, are commonly optimized with the Bradley-Terry objective to output a scalar reward~\citep{liu-2024-skywork-reward, cai-2024-internlm2, lou-2024-uncertainty}.
A key limitation of this paradigm is that its reward scores reflect only relative preferences, not intrinsic quality.
It indicates that one response is preferred but fails to explain why, making it difficult to establish a trusted acceptance threshold to discern high-quality responses from low-quality ones.
In response, \textit{Generative Reward Models} (GRMs) have emerged~\citep{mahan-2024-generative, zhang-2024-generative}.
These models leverage the native generative capabilities of LLMs to produce Chain-of-Thought critiques before rendering a preference judgment, conceptually aligning with the LLM-as-a-Judge paradigm~\citep{zheng-2023-judging}.
While GRMs offer superior interpretability through their critique generation, they often rely on rigid pairwise input formats that limit flexibility in Best-of-N (BoN) scenarios.
Moreover, achieving performance comparable to DRMs frequently requires costly pointwise supervision for calibration, substantially increasing the annotation burden.
Consequently, the field faces a critical trade-off: choosing between the efficiency of DRMs and the interpretability of GRMs, with neither approach offering a complete solution.

% ===============================================================================================================================================================

% [Our Method: OPRM]
To transcend this trade-off, we introduce a novel reward modeling paradigm: \textit{\textbf{P}robabilistic \textbf{R}eward \textbf{M}odel} (PRM).
Instead of approximating rewards with a deterministic scalar value like the Bradley-Terry model~\citep{bradley-1952-rank}, PRM reframes the task as learning a full probability distribution over the reward space.
Since learning this continuous distribution is computationally intractable, we translate it into a discrete realization: \textit{\textbf{O}rdinal \textbf{P}robabilistic \textbf{R}eward \textbf{M}odel} (OPRM).
Specifically, OPRM discretizes the reward space into a finite set of ordinal ratings~\citep{liu-2025-reward, wang-2025-survey}, thereby replacing the intractable integration with a closed-form summation that makes our paradigm more practical.
Thus, OPRM resolves the core trade-off. By providing a full reward distribution, it unlocks richer interpretability and uncertainty estimation than DRMs, while its flexible input format enables efficient scoring of single or multiple responses, making it better suited than GRMs for modern evaluations like Best-of-N (BoN).
Table~\ref{tab:comparison} summarizes the advantages of OPRM over existing reward modeling baselines.

% ===============================================================================================================================================================

% [Our Method: RgFT]
Building upon the OPRM paradigm, we further propose \textit{\textbf{R}e\textbf{g}ion \textbf{F}looding \textbf{T}uning} (RgFT), a novel training strategy designed to calibrate the reward distribution to reflect absolute textual quality.
The core principle of RgFT is to leverage quality-level annotations (i.e., \textcolor{good}{\textbf{good}}, \textcolor{normal}{\textbf{normal}}, and \textcolor{bad}{\textbf{bad}}) on preference data.
Rather than optimizing over the full distribution of ordinal ratings, RgFT guides the model to concentrate probability mass within rating sub-regions corresponding to the quality-level labels.
While simply restricting scores to specific quality intervals can lead to optimization stagnation due to constant gradients, RgFT \textbf{floods} these rigid constraints into a triangular probability landscape.
This restores the gradient incentives, guiding the model to not only locate the correct quality region but also maximize the preference margin by pushing scores towards the extremes of their respective ranges.
Critically, RgFT facilitates semi-supervised training by jointly leveraging a mixture of quality-labeled and preference-only data, obviating the need for costly large-scale annotation.

% ===============================================================================================================================================================

% [Contributions]

Before delving into details, we summarize our contributions as follows:

\begin{enumerate}[label=\textbullet, leftmargin=*, topsep=0pt, itemsep=2pt]
    \item We propose a novel reward modeling paradigm, the Ordinal Probabilistic Reward Model. By learning a full probability distribution for a response's quality, OPRM mitigates the core trade-off between the efficiency of DRMs and the interpretability of GRMs.
    \item We design a data-efficient training strategy, Region Flooding Tuning, which grounds the reward distribution in an absolute quality scale by guiding the model to concentrate probability mass within correct rating sub-regions.
    \item We conduct extensive experiments on four benchmarks covering over ten tasks, demonstrating the effectiveness of OPRM in precise reward modeling across diverse scenarios. Additional studies confirm that RgFT significantly improves the accuracy, robustness, and interpretability of OPRM.
\end{enumerate}
 
% ===============================================================================================================================================================

\begin{table}[t]
    \centering
    \caption{\textbf{Comparison of \ours with baseline reward models across multiple dimensions}. Margin Sensitivity (whether distinguish samples with subtle preference differences), Require Training (whether requires training on preference data), Value Head Free (whether eliminates the need for additional value head), and Input Flexibility (whether supports rating single and multiple responses).}
    \resizebox{1.00\linewidth}{!}{
        \begin{tabular}{lcccccc}
        \toprule
        \bfseries Baselines & \bfseries Input Format & \bfseries \makecell{Output Format}  & \bfseries \makecell{Margin \\ Sensitivity} & \bfseries \makecell{Require \\ Training} & \bfseries \makecell{Value \\ Head Free} & \bfseries \makecell{Input \\ Flexible} \\
        \midrule
        Bradley-Terry~\citep{bradley-1952-rank} & Single Response & Continuous Score & \cmark & \cmark & \xmark & \cmark \\
        PairRM~\citep{jiang-2023-llm-blender} & Response Pairs & Continuous Score  & \cmark & \cmark & \xmark & \xmark \\
        CLoud~\citep{ankner-2024-critique} & Single Response & Critique + Continuous Score & \cmark & \cmark & \xmark & \cmark \\
        LLM-as-a-Judge~\citep{zheng-2023-judging} & Response Pairs & Discrete Score  & \xmark & \xmark & \cmark & \xmark \\
        Pointwise GRM~\citep{liu-2025-inference-time} & Single Response & Critique + Discrete Score & \xmark & \cmark & \cmark & \cmark \\
        \rowcolor{metabg} \bf \ours~(Ours) & Single Response & Continuous Score & \cmark & \cmark & \cmark & \cmark \\
        \bottomrule
        \end{tabular}
    }
    \label{tab:comparison}
    \vspace{-0.2cm}
\end{table}

% ===============================================================================================================================================================

\section{Related Work}\label{sec:related}
\paragraph{Discriminative Reward Model.}
Discriminative reward model typically consists of a base model and a MLP-based reward head (classifier) that outputs a scalar score for a given input.
These models are commonly trained using the Bradley-Terry (BT)~\citep{bradley-1952-rank} loss to maximize the reward margin between chosen and rejected responses. 
While the core BT loss remains a standard component, considerable research has focused on enhancing data quality and refining the modeling framework (e.g. Skywork-reward~\citep{liu-2024-skywork-reward}, InternLM2-reward~\citep{cai-2024-internlm2}, Helpsteer2-preference~\citep{wang-2024-helpsteer2preference}, QRM~\citep{dorka-2024-quantile}, URM~\citep{lou-2024-uncertainty}, CLoud~\citep{ankner-2024-critique}, ArmoRM~\citep{wang-2024-interpretable}, and PURM~\citep{sun-2025-probabilistic}), further boosting DRM performance.
Nonetheless, these methods are limited to learning a pairwise ranking, yielding scores that are unbounded and difficult to interpret. 
In contrast, our approach learns a probabilistic distribution over scores, which enables more reliable and calibrated outputs.

\paragraph{Generative Reward Model.} 
Generative reward models directly leverage LLM-generated outputs to evaluate preference, which is aligned with the LLM-as-a-Judge paradigm~\citep{zheng-2023-judging}.
These models output chain-of-thought reasoning (critiques) before generating preference judgments (e.g., Critic-RM~\citep{yu-2024-self-generated}, PROMETHEUS~\citep{kim-2023-prometheus}, CLoud~\citep{ankner-2024-critique}, GenRM~\citep{zhang-2024-generative, mahan-2024-generative}, Synthetic Critique~\citep{ye-2024-improving}, and RISE~\citep{yu-2025-improve}), enhancing the interpretability of the reward signals.
Recent advances have employed reinforcement learning to construct reasoning-based reward models (SPCT~\citep{liu-2025-inference-time}, RM-R1~\citep{chen-2025-rmr1}, J1~\citep{whitehouse-2025-j1}, RRM~\citep{guo-2025-reward}, and JudgeLRM~\citep{chen-2025-judgelrm}), demonstrating promising scalability in inference-time computation.
However, these approaches often struggle to outperform DRMs under computational constraints.
Conversely, our approach achieves comparable efficiency and performance while preserving interpretability.

\paragraph{Ordinal Regression and Distribution Learning.} 
Deep ordinal regression has evolved from simple continuous discretization~\citep{fu-2018-deep, rothe-2018-deep} to distribution ordering learning~\citep{wang-2025-survey}. Prominent methods like SORD~\citep{diaz-2019-soft}, ALDL~\citep{li-2022-unimodal}, and POE~\citep{li-2021-learning} model label distributions or latent uncertainty to capture ordinal relationships effectively. While OPRM draws inspiration from these probabilistic frameworks to construct reward distributions over the LLM vocabulary, it is conceptually distinct from RLHF approaches utilizing ordinal feedback~\citep{liu-2025-reward}. Prior RLHF works typically focus on refining the granularity of input supervision (e.g., "significantly better") for continuous regressors. In contrast, OPRM enforces ordinality within the output representation space, treating rewards as discrete variables on a statistically ordinal scale anchored to semantic quality, thereby bridging distributional ordinal regression with pairwise preference optimization.
% =================================================================================================================================================================

\section{Preliminaries}\label{sec:preliminary}
\paragraph{Preference data annotation.} To annotate the preference data, the SFT model $\pi^\text{SFT}$ is given prompts $x$ to two distinct outputs $(y_1, y_2) \sim \pi^\text{SFT}(y \mid x)$. These output pairs are then presented to human labelers, who express their preference for one output. This preference can be denoted as $y_{\mathrm{c}} \succ y_{\mathrm{r}} \mid x$, where $y_{\mathrm{c}}$ and $y_{\mathrm{r}}$ represent the chosen and rejected outputs, respectively, from the pair $(y_1, y_2)$.

\paragraph{Standard Bradley-Terry Reward Modeling.}
Following the Bradley-Terry model~\citep{bradley-1952-rank}, we model the probability of preferring response $y_{\mathrm{c}}$ over $y_{\mathrm{r}}$ based on their underlying scalar rewards, which are provided by a reward function $r_{\psi}(x, y)$. This preference distribution is formulated as follows:
\begin{equation} \label{eq:bt_model}
    \begin{aligned}
        P_\psi(y_{\mathrm{c}} \succ y_{\mathrm{r}} \mid x) & = \frac{\exp{(r_\psi(x,y_{\mathrm{c}}))}}{\exp{(r_\psi(x,y_{\mathrm{c}}))}+\exp{(r_\psi(x,y_{\mathrm{r}}))}}, \\
        & = \sigma(r_\psi(x, y_{\mathrm{c}}) - r_\psi(x, y_{\mathrm{r}})),
    \end{aligned}
\end{equation}
which $\sigma$ is the logistic function. Treating the problem as a binary classification task yields the negative log-likelihood loss function:
\begin{equation} \label{eq:bt_loss}
    \begin{aligned}
        \mathcal{L} (r_\psi) & = -\mathbb{E}_{(x, y_{\mathrm{c}}, y_{\mathrm{r}}) \sim \mathcal{D_{\mathrm{rm}}}} [\log P_\psi(y_{\mathrm{c}} \succ y_{\mathrm{r}} \mid x)], \\
        & = -\mathbb{E}_{(x, y_{\mathrm{c}}, y_{\mathrm{r}}) \sim \mathcal{D_{\mathrm{rm}}}} [\log \sigma(r_\psi(x, y_{\mathrm{c}}) - r_\psi(x, y_{\mathrm{r}}))],
    \end{aligned}
\end{equation}

where dataset is composed of comparisons denoted as $\mathcal{D_{\mathrm{rm}}} = \{x^{(i)}, y_{\mathrm{c}}^{(i)}, y_{\mathrm{r}}^{(i)}\}_{i=1}^N$. 
In the realm of LLMs, the network $r_\psi(x, y)$ is often initialized using the SFT model $\pi^\mathrm{SFT}(y \mid x)$. It then incorporates an additional linear layer on the final transformer layer to generate a singular scalar prediction representing the reward value.

\section{Ordinal Probabilistic Reward Model}
In this section, we introduce \textbf{Ordinal Probabilistic Reward Model}, a novel reward modeling paradigm that learns a probability distribution over response quality.
We begin by outlining the continuous form of our reward modeling optimization paradigm, the \textbf{Probabilistic Reward Modeling} (\S~\ref{sec:oprm-op}). 
We then discretize this formulation into a tractable form, termed OPRM (\S~\ref{sec:oprm-c2d}) and conclude by presenting the complete training and inference pipeline for \ours (\S~\ref{sec:oprm-pipe}).

\begin{figure*}[t]
    \centering
    \includegraphics[width=\linewidth]{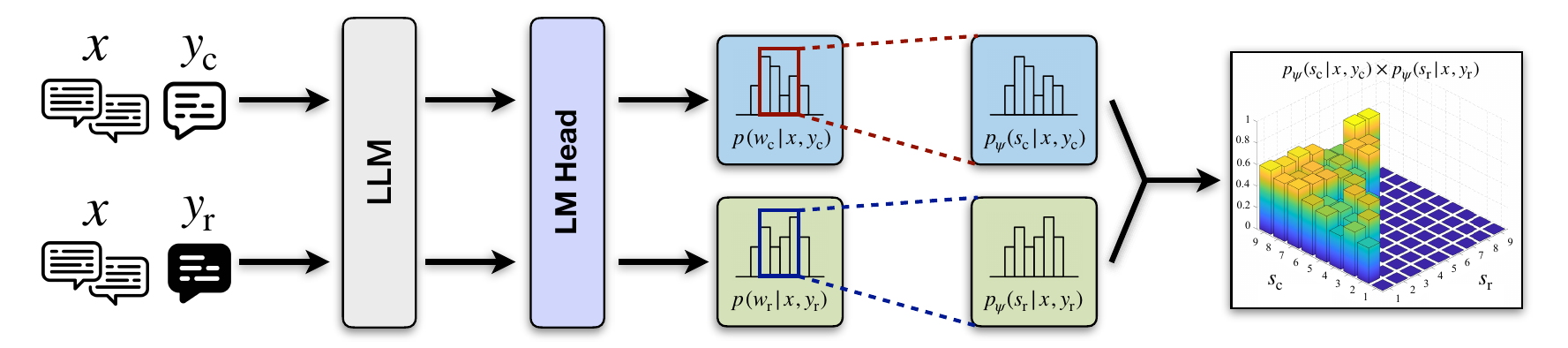}
    \vspace{-0.55cm}
    \caption{\textbf{The architectures of Ordinal Probabilistic Reward Model}. Given a problem and a pair of responses, designated as \textbf{chosen} and \textbf{rejected}, the \ours utilizes its language model (LM) head to obtain the ordinal rating probabilities for each response. A joint probability matrix is then constructed by computing the Cartesian product of these two sets of probabilities for optimization.}
\label{fig:oprm}
\vspace{-0.2cm}
\end{figure*}

\subsection{Probabilistic Reward Modeling}\label{sec:oprm-op}
Departing from the conventional Bradley-Terry reward model (Section~\ref{sec:preliminary}), which estimates a single scalar value for each response, we propose a reward modeling objective derived from Random Utility Model theory~\citep{manski-1977-structure, cascetta-2009-random}. Our objective enables the model to learn a probability distribution over the quality of each response.
Concretely, we model the quality score of a response $y$ for a given input $x$ as a continuous random variable $S$. This random variable is supported on a bounded interval $[a, b] \subset \mathbb{R}$. Our reward model, parameterized by $\psi$, learns the conditional probability density function (PDF) $p_\psi(s \mid x, y)$ of this variable, where $s$ is a realization of $S$. This density must satisfy $\int_a^b p_\psi(s \mid x, y) \, ds = 1$.

Given a preference pair $(y_{\mathrm{c}}, y_{\mathrm{r}})$ with a chosen and a rejected response, we model their quality scores as two independent random variables, $S_{\mathrm{c}}$ and $S_{\mathrm{r}}$. Their scores are drawn from the distributions defined by their respective conditional PDFs: $s_{\mathrm{c}} \sim p_\psi(\cdot \mid x, y_{\mathrm{c}})$ and $s_{\mathrm{r}} \sim p_\psi(\cdot \mid x, y_{\mathrm{r}})$. 
The probability of the preference $y_{\mathrm{c}} \succ y_{\mathrm{r}}$ is then modeled as the probability that the score of the chosen response exceeds that of the rejected one:
\begin{equation}
    P_\psi(y_{\mathrm{c}} \succ y_{\mathrm{r}} \mid x) = \mathbb{E}_{s_{\mathrm{c}}, s_{\mathrm{r}}} \left[ \mathbbm{1}(s_{\mathrm{c}} > s_{\mathrm{r}}) \right]
\end{equation}
where $\mathbbm{1}(\cdot)$ denotes the indicator function. 
Expanding the expectation in integral form over the bounded interval $[a, b]$, we obtain:
\begin{equation}
    P_\psi(y_{\mathrm{c}} \succ y_{\mathrm{r}} \mid x) = \int_a^b \int_a^b \mathbbm{1}(s_{\mathrm{c}} > s_{\mathrm{r}}) \, p_\psi(s_{\mathrm{c}} \mid x, y_{\mathrm{c}}) \, p_\psi(s_{\mathrm{r}} \mid x, y_{\mathrm{r}}) \, \mathrm{d}s_{\mathrm{r}} \, \mathrm{d}s_{\mathrm{c}}
\end{equation}
This expression corresponds to computing the probability that a random score sampled from the chosen response exceeds a random score from the rejected response, integrating over their joint distribution constrained to the bounded interval.
Since $\mathbbm{1}(s_{\mathrm{c}} > s_{\mathrm{r}})=1$ only when $s_{\mathrm{c}} > s_{\mathrm{r}}$ and $0$ otherwise, so it effectively truncates the integral domain to $(a, s_{\mathrm{c}})$ for $p_\psi(s_{\mathrm{r}} \mid x, y_{\mathrm{r}})$. 
Thus, we can equivalently restructure the double integral as follows:
\begin{equation} \label{eq:probabilistic_reward_modeling}
    P_\psi(y_{\mathrm{c}} \succ y_{\mathrm{r}} \mid x) = \int_a^b p_\psi(s_{\mathrm{c}} \mid x, y_{\mathrm{c}}) \left( \int_a^{s_{\mathrm{c}}} p_\psi(s_{\mathrm{r}} \mid x, y_{\mathrm{r}}) \, \mathrm{d}s_{\mathrm{r}} \right) \mathrm{d}s_{\mathrm{c}}
\end{equation}
Finally, we can simply optimize the Eq. (\ref{eq:probabilistic_reward_modeling}) by minimizing the negative log-likelihood loss.
Notably, the Bradley-Terry model is a special case of the Probabilistic Reward Modeling framework, arising when the quality score distribution is constrained to a unimodal Gumbel distribution with fixed shape parameters (see Appendix~\ref{apdix:theory} for a detailed proof).
However, this objective lacks a closed-form analytical solution and requires estimation through Monte Carlo sampling.
This computational challenge motivates our transition from the continuous formulation to a more tractable discrete one.

\subsection{From Continuous to Discrete}\label{sec:oprm-c2d}
To obtain a closed-form analytical solution, we adapt the continuous formulation in Eq. (\ref{eq:probabilistic_reward_modeling}) by modeling the scores as discrete random variables over a finite set of ordinal ratings $\{a, a+1, \dots, b\}$. This yields the following closed-form expression:
\begin{equation} \label{eq:ordinal_probabilistic_reward_modeling}
    P_\psi(y_{\mathrm{c}} \succ y_{\mathrm{r}} \mid x) = \sum_{s_{\mathrm{c}}=a}^{b} p_\psi(s_{\mathrm{c}} \mid x, y_{\mathrm{c}}) \left( \sum_{s_{\mathrm{r}}=a}^{s_{\mathrm{c}} - 1} p_\psi(s_{\mathrm{r}} \mid x, y_{\mathrm{r}}) \right)
\end{equation}
As observed in Eq. (\ref{eq:bt_loss}), the Bradley-Terry model maximizes the score gap between chosen ($y_{\mathrm{c}}$) and rejected ($y_{\mathrm{r}}$) responses, creating a steep reward landscape with pronounced gradients beneficial for RL optimization.
Similarly, our optimization objective in Eq. (\ref{eq:ordinal_probabilistic_reward_modeling}) inherits and generalizes this desirable property.
By operating over full reward distributions instead of single scalars, our objective naturally shifts probability mass upward for chosen responses and downward for rejected ones, thereby widening their separation.

This intuition is formally supported by the gradient dynamics of the probability mass functions (PMFs). Specifically, the sensitivity of the objective $J \triangleq P_\psi(y_{\mathrm{c}} \succ y_{\mathrm{r}} \mid x)$ with respect to the mass at score $k$ is given by Eq. (\ref{eq:pmfs}).
\begin{equation}\label{eq:pmfs}
    \frac{\partial J}{\partial p_{\mathrm{c}}(k)} = P(s_{\mathrm{r}} < k), \quad \frac{\partial J}{\partial p_{\mathrm{r}}(k)} = P(s_{\mathrm{c}} > k).
\end{equation}
These derivatives imply that increasing the probability mass for $y_{\mathrm{c}}$ at score $k$ is incentivized whenever $y_{\mathrm{r}}$ is likely to be lower than $k$. Consequently, shifting mass from a lower score $k$ to a higher score $k+1$ for the chosen response yields a strictly non-negative gain proportional to $p_{\mathrm{r}}(k)$. This creates a consistent optimization pressure driving the distribution of $y_{\mathrm{c}}$ towards the maximum score $b$ and $y_{\mathrm{r}}$ towards the minimum score $a$. 
A detailed proof via gradient analysis can be found in Appendix~\ref{apdix:gradient_analysis}.

\textbf{Ordinal Probabilistic Reward Modeling} presents two key advantages:
(1) \uline{\textit{Quantifying Uncertainty}}, the variance of the output distribution serves as a measure of model confidence—wide distributions for ambiguous comparisons indicate uncertainty, while sharp, peaked distributions reflect clear preferences, enhancing interpretability.
Our method thus explicitly captures the inherent uncertainty in human preference judgments, a crucial aspect often overlooked by discriminative reward models.
(2) \uline{\textit{Handling Annotation Disagreement}}, our method can represent multimodal score distributions (e.g., Mixture of Gaussians), enabling it to capture disagreements among annotators. By explicitly capturing conflicting signals within the score distribution, our model becomes robust to the performance degradation often caused by inconsistent preference data~\citep{sun-2025-rethinking}. This contrasts sharply with traditional methods like the Bradley-Terry model, which are restricted to unimodal preferences.

\begin{figure*}[t]
    \centering
    \includegraphics[width=\linewidth]{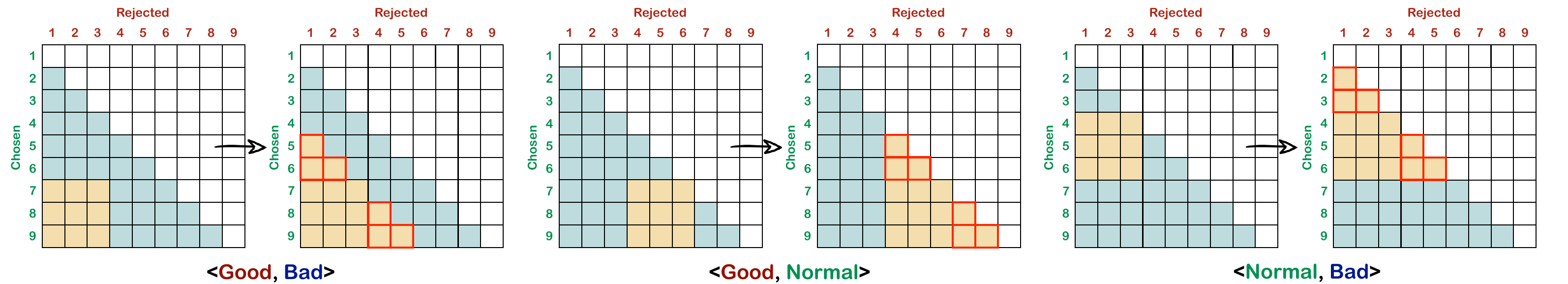}
    \vspace{-0.55cm}
    \caption{\textbf{Region Flooding Tuning}. To ensure the correctness of the reward modeling, region flooding is applied to each of the three partition combinations, resulting in a lower triangular form.}
\label{fig:region_flooding_tuning}
\vspace{-0.2cm}
\end{figure*}

\subsection{Pipeline}\label{sec:oprm-pipe}
Our training pipeline, illustrated in Figure~\ref{fig:oprm}, begins by formatting the preference data pairs $(x, y_{\mathrm{c}}, y_{\mathrm{r}})$ (see Section~\ref{sec:preliminary}) into a structured input using a prompt template. The details of this template and criteria are provided in Appendix~\ref{apdix:prompt_temp}.
As the next step in the pipeline, following the parameter-free technique from prior work~\citep{cui-2023-ultrafeedback}, we compute the distribution over quality score $s \in \{a, a + 1, \dots, b\}$ (where $a, b \in \mathbb{Z}$) by directly repurposing the LM head's vocabulary probabilities, thus obviating the need for a separate prediction head and avoiding any new parameters.
In our implementation, we set the quality score range from 1 to 9 (i.e., $a=1, b=9$)
The score distribution is then formed by directly extracting the vocabulary probabilities of the corresponding numeric tokens (i.e., \texttt{`1'} to \texttt{`9'}).
This approach allows the model to directly leverage its inherent ordinal knowledge of numbers.

In summary, both the chosen and rejected inputs are fed into the LLM backbone and its LM head, yielding the post-softmax vocabulary probability distributions $p(w_{\mathrm{c}} \mid x, y_{\mathrm{c}})$ and $p(w_{\mathrm{r}} \mid x, y_{\mathrm{r}})$ at the last token position.
The probabilities of all numeric tokens are then normalized to form the distribution for our ordinal probabilistic reward modeling:
\begin{equation}
\resizebox{0.95\textwidth}{!}{$
    p_\psi(s_{\mathrm{c}} = i \mid x, y_{\mathrm{c}}) = \dfrac{p(w_{\mathrm{c}} = \texttt{`i'} \mid x, y_{\mathrm{c}})}{\sum_{j=1}^{9} p(w_{\mathrm{c}} = \texttt{`j'} \mid x, y_{\mathrm{c}})}\;, \;
    p_\psi(s_{\mathrm{r}} = i \mid x, y_{\mathrm{r}}) = \dfrac{p(w_{\mathrm{r}} = \texttt{`i'} \mid x, y_{\mathrm{r}})}{\sum_{j=1}^{9} p(w_{\mathrm{r}} = \texttt{`j'} \mid x, y_{\mathrm{r}})}
$}
\end{equation}
Finally, we substitute the obtained $p_\psi(s_{\mathrm{c}} = i \mid x, y_{\mathrm{c}})$ and $p_\psi(s_{\mathrm{r}} = i \mid x, y_{\mathrm{r}})$ into Eq. (\ref{eq:ordinal_probabilistic_reward_modeling}) and maximize $P_\psi(y_{\mathrm{c}} \succ y_{\mathrm{r}} \mid x)$ using the negative log-likelihood loss.
See Appendix~\ref{apdix:computational_overhead} for a detailed computational overhead analysis.

During the inference stage, we simply input a response $y$ given prompt $x$ to obtain a quality score distribution $p_\psi(s \mid x, y)$. 
We can derive a scalar reward score through either argmax or weighted averaging. For a discussion of other possible decoding strategies, see Appendix~\ref{apdix:cust_decoding_method}.
In our subsequent experiments, we adopt the straightforward weighted averaging approach to compute the reward score: $r_\psi(x,y) = \sum_{s=a}^{b} s \cdot p_\psi(s \mid x, y)$, avoiding the tie-prone argmax method.

\begin{table}[t]
    \centering
    \caption{Overall results of different methods and models on four RM benchmarks. \textbf{bold numbers} indicate the best performance. \uline{Underlined numbers} indicate the second best. The Overall* score is the average performance excluding Reward Bench due to its known data contamination issues.}
    \resizebox{\textwidth}{!}{
        \begin{tabular}{lcccccc}
            \toprule
            \textbf{Model} & \textbf{Reward Bench} & \textbf{PPE-P} & \textbf{PPE-C} & \textbf{RMB} & \textbf{Overall} & \textbf{$\text{Overall}^{*}$} \\
            \midrule\midrule
            \multicolumn{6}{c}{\textit{Reported Results of Public Models}} \\
            Skywork-Reward-Gemma-2-27B & 93.8 & 56.6 & 56.6 & 60.2 & \cellcolor{gray!20}66.8 & \cellcolor{gray!40}57.8 \\
            DeepSeek-V2.5-0905 & 81.5 & 62.8 & 58.5 & 65.7 & \cellcolor{gray!20}67.1 & \cellcolor{gray!40}62.3 \\
            Gemini-1.5-Pro & 86.8 &  66.1 & 59.8 & 56.5 & \cellcolor{gray!20}67.3 & \cellcolor{gray!40}60.8 \\
            ArmoRM-8B-v0.1 & 90.4 & 60.6 & 61.2 & 64.6 & \cellcolor{gray!20}69.2 & \cellcolor{gray!40}62.1 \\
            InternLM2-20B-Reward & 90.2 & 61.0 & 63.0 & 62.9 & \cellcolor{gray!20}69.3 & \cellcolor{gray!40}62.3 \\
            LLaMA-3.1-70b-Instruct & 84.1 & 65.3 & 59.2 & 68.9 & \cellcolor{gray!20}69.4 & \cellcolor{gray!40}64.5 \\
            Claude-3.5-sonnet & 84.2 & 65.3 & 58.8 & 70.6 & \cellcolor{gray!20}69.7 & \cellcolor{gray!40}63.2 \\
            Nemotron-4-340B-Reward & 92.0 & 59.3 & 60.8 & 69.9 & \cellcolor{gray!20}70.5 & \cellcolor{gray!40}63.3 \\
            GPT-4o & 86.7 & 67.1 & 57.6 & 73.8 & \cellcolor{gray!20}71.3 & \cellcolor{gray!40}66.2 \\
            \midrule
            \multicolumn{6}{c}{\textit{Reproduced Results of Baseline Methods From DeepSeek}} \\
            LLM-as-a-Judge & 83.4 & 64.2 & 58.8 & 64.8 & \cellcolor{gray!20}67.8 & \cellcolor{gray!40}62.6 \\
            DeepSeek-BTRM-27B & 81.7 & 68.3 & 66.7 & 57.9 & \cellcolor{gray!20}68.6 & \cellcolor{gray!40}64.3 \\
            CLoud-Gemma-2-27B & 82.0 & 67.1 & 62.4 & 63.4 & \cellcolor{gray!20}68.7 & \cellcolor{gray!40}64.3 \\
            DeepSeek-PairRM-27B & 87.1 & 65.8 & 64.8 & 58.2 & \cellcolor{gray!20}69.0 & \cellcolor{gray!40}62.9 \\
            DeepSeek-GRM-27B-RFT & 84.5 & 64.1 & 59.6 & 67.0 & \cellcolor{gray!20}68.8 & \cellcolor{gray!40}63.6 \\ 
            DeepSeek-GRM-27B & 86.9 & 64.7 & 59.8 & 69.0 & \cellcolor{gray!20}69.9 & \cellcolor{gray!40}64.5 \\
            \midrule
            \multicolumn{6}{c}{\textit{Results of Our Method}} \\
            \textbf{\ours-Qwen2.5-7B} & 87.8 & 61.1 & 61.3 & 71.5 & \cellcolor{gray!20}70.4 & \cellcolor{gray!40}64.6 \\
            \textbf{\ours-Qwen2.5-14B} & 89.3 & 63.0 & 64.3 & 73.8 & \cellcolor{gray!20}72.6 & \cellcolor{gray!40}67.0 \\
            \textbf{\ours-Qwen2.5-32B} & 91.3 & 63.9 & 66.1 & 75.6 & \cellcolor{gray!20}\textbf{74.2} & \cellcolor{gray!40}68.5 \\
            \textbf{\ours-Qwen2.5-72B} & 89.3 & 65.1 & 64.3 & 73.5 & \cellcolor{gray!20}73.1 & \cellcolor{gray!40}67.6 \\
            \midrule
            \multicolumn{6}{c}{\textit{Results of Our Method (w/ Region Flooding Tuning)}} \\
            \textbf{\ours-RgFT-Qwen2.5-7B} & 86.2 & 62.3 & 62.4 & 70.1 & \cellcolor{gray!20}70.3\scriptsize(\textcolor{darkgreen}{$\downarrow$0.1}) & \cellcolor{gray!40}64.9\scriptsize(\textcolor{darkred}{$\uparrow$0.3}) \\
            \textbf{\ours-RgFT-Qwen2.5-14B} & 87.3 & 63.4 & 65.6 & 72.8 & \cellcolor{gray!20}72.3\scriptsize(\textcolor{darkgreen}{$\downarrow$0.3}) & \cellcolor{gray!40}67.3\scriptsize(\textcolor{darkred}{$\uparrow$0.3}) \\
            \textbf{\ours-RgFT-Qwen2.5-32B} & 88.9 & 64.6 & 67.3 & 74.8 & \cellcolor{gray!20}\uline{73.9}\scriptsize(\textcolor{darkgreen}{$\downarrow$0.3}) & \cellcolor{gray!40}\textbf{68.9}\scriptsize(\textcolor{darkred}{$\uparrow$0.4}) \\
            \textbf{\ours-RgFT-Qwen2.5-72B} & 89.1 & 65.3 & 66.4 & 74.2 & \cellcolor{gray!20}73.8\scriptsize(\textcolor{darkred}{$\uparrow$0.7}) & \cellcolor{gray!40}\uline{68.6}\scriptsize(\textcolor{darkred}{$\uparrow$1.0}) \\
            \bottomrule
        \end{tabular}
    }
    \label{tab:main_results}
    \vspace{-0.2cm}
\end{table}

\section{Region Flooding Tuning}\label{sec:method_rgft}
While OPRM effectively captures relative preferences, precisely aligning its scoring distribution with absolute quality judgments presents a further challenge. 
To address this, we introduce \textbf{Region Tuning} (RgT), a cost-effective method that enhances the model's fidelity to absolute quality scores using minimal annotations (\S~\ref{sec:reg_tuning}). 
Subsequently, we refine RgT to preserve the desirable properties (as detailed in Appendix~\ref{apdix:gradient_analysis}), culminating in our final method: \textbf{Region Flooding Tuning} (RgFT) (\S~\ref{sec:reg_flooding_tuning}).

\subsection{Region Tuning}\label{sec:reg_tuning}
Building upon the OPRM optimization objective from Eq. (\ref{eq:ordinal_probabilistic_reward_modeling}), which employs the finite set of ordinal ratings $S = \{1,2,\dots,9\}$ for all data, we introduce a more fine-grained partitioning based on the absolute quality of each response, a technique we term \textit{\textbf{R}e\textbf{g}ion \textbf{T}uning} (RgT).

Specifically, we further partition the finite set into three quality levels, guiding the model to concentrate the probability mass
within corresponding rating sub-region: 
$S_{\textcolor{bad}{bad}} = \{1,2,3\}$, $S_{\textcolor{normal}{normal}} = \{4,5,6\}$, and $S_{\textcolor{good}{good}} = \{7,8,9\}$.
Consequently, for a single preference data point consisting of a chosen and a rejected response, there are six possible combinations of quality levels. These include pairs from different levels, as well as pairs where both responses fall into the same level, denoted as <$l_\text{chosen}, l_\text{rejected}$>:
<\textcolor{good}{\textbf{good}}, \textcolor{normal}{\textbf{normal}}>, 
<\textcolor{good}{\textbf{good}}, \textcolor{bad}{\textbf{bad}}>,
<\textcolor{normal}{\textbf{normal}}, \textcolor{bad}{\textbf{bad}}>,
<\textcolor{good}{\textbf{good}}, \textcolor{good}{\textbf{good}}>,
<\textcolor{normal}{\textbf{normal}}, \textcolor{normal}{\textbf{normal}}>,
<\textcolor{bad}{\textbf{bad}}, \textcolor{bad}{\textbf{bad}}>.

This partitioning allows us to redefine the preference probability by conditioning it on the quality levels of the chosen and rejected responses. 
Thus, the optimization objective is formulated as: 
\begin{equation}\label{eq:region_ordinal_probabilistic_reward_modeling}
    P_\psi(y_{\mathrm{c}} \succ y_{\mathrm{r}} \mid x, l_\text{chosen}, l_\text{rejected}) = \sum_{s_{\mathrm{c}} \in S_{l_\text{chosen}}} p_\psi(s_{\mathrm{c}} \mid x, y_{\mathrm{c}}) \left( \sum_{s_{\mathrm{r}} \in S_{l_\text{rejected}}} p_\psi(s_{\mathrm{r}} \mid x, y_{\mathrm{r}}) \mathbbm{1}(s_{\mathrm{c}} > s_{\mathrm{r}}) \right) 
\end{equation}
Details on the partition of the semantic regions ($S_{\textcolor{bad}{\text{bad}}}$, $S_{\textcolor{normal}{\text{normal}}}$, $S_{\textcolor{good}{\text{good}}}$) are provided in Appendix~\ref{apdix:sensitivity_analysis}.

\begin{table}[t]
	\centering
	% \vspace{-1.0cm}
	\caption{Detailed results of different methods on the PPE Correctness benchmark. \textbf{Bold numbers} indicate the best performance. \uline{Underlined numbers} indicate the second best.}
	\vspace{-0.1cm}
	\resizebox{\textwidth}{!}{
		\begin{tabular}{lcccccc}
		\toprule
		\textbf{Model} & \textbf{MMLU-Pro} & \textbf{MATH} & \textbf{GPQA} & \textbf{MBPP-Plus}  &  \textbf{IFEval}  & \textbf{PPE Correctness} \\
		\midrule\midrule
		\multicolumn{7}{c}{\textit{Results of Our Method}} \\
		OPRM-Qwen2.5-7B & 65.2 & 70.1 & 56.3 & 59.0 & 56.1 & \cellcolor{gray!20}61.3 \\
		OPRM-Qwen2.5-14B & 66.7 & 70.7 & 57.1 & 67.4 & 59.5 & \cellcolor{gray!20}64.3 \\
		OPRM-Qwen2.5-32B & 71.2 & 73.2 & 57.9 & 66.2 & 62.2 & \cellcolor{gray!20}66.1 \\
		OPRM-Qwen2.5-72B & 73.4 & 75.9 & 58.6 & 54.1 & 59.5 & \cellcolor{gray!20}64.3 \\
		\midrule
		\multicolumn{7}{c}{\textit{Results of Our Method (w/ Region Flooding Tuning)}} \\
		\textbf{OPRM-RgFT-Qwen2.5-7B} & 64.8 & 71.6 & 55.9 & 63.0 & 56.8 & \cellcolor{gray!20}62.4\scriptsize(\textcolor{darkred}{$\uparrow$1.1}) \\
		\textbf{OPRM-RgFT-Qwen2.5-14B} & 69.5 & 74.0 & 57.3 & 67.0 & 60.0 & \cellcolor{gray!20}65.6\scriptsize(\textcolor{darkred}{$\uparrow$1.3}) \\
		\textbf{OPRM-RgFT-Qwen2.5-32B} & 73.3 & 76.8 & 58.5 & 67.2 & 60.6 & \cellcolor{gray!20}\textbf{67.3}\scriptsize(\textcolor{darkred}{$\uparrow$1.2}) \\
		\textbf{OPRM-RgFT-Qwen2.5-72B} & 72.8 & 77.1 & 59.0 & 62.0 & 61.2 & \cellcolor{gray!20}\uline{66.4}\scriptsize(\textcolor{darkred}{$\uparrow$2.1}) \\
		\bottomrule
		\end{tabular}
	}
	\label{tab:rgft-results}
	\vspace{-0.2cm}
\end{table}

\subsection{From Region Tuning to Region Flooding Tuning}\label{sec:reg_flooding_tuning}
As shown in Figure~\ref{fig:region_flooding_tuning}, when $l_\text{chosen} \neq l_\text{rejected}$, Eq. (\ref{eq:region_ordinal_probabilistic_reward_modeling}) optimizes a square-shaped joint probability region, resulting in constant partial derivatives $\frac{\partial P}{\partial p_{\mathrm{c}}(k)}$ and $\frac{\partial P}{\partial p_{\mathrm{r}}(k)}$.
In this case, the optimization objective no longer shifts the probability mass of the chosen response upwards and the rejected response downwards to increase their separation.
This leads to the loss of a desirable property of OPRM, as mentioned in Section~\ref{sec:oprm-c2d} (see Appendix~\ref{apdix:gradient_analysis} and~\ref{apdix:source_of_gains} for a formal proof and analysis).

As shown in Figure~\ref{fig:region_flooding_tuning}, we propose region flooding to the optimized joint probability region, expanding it into a lower triangular shape to preserve the desired property.
As its expansion process closely resembles breadth-first search algorithm, we term it \textit{\textbf{R}e\textbf{g}ion \textbf{F}looding \textbf{T}uning} (RgFT).
RgFT provides three key advantages:
(1) \uline{\textit{Interpretability}},
RgFT constrains the model to concentrate probability mass within the score regions correspond to pre-defined quality levels, enabling reward scores to more accurately reflect the absolute quality of responses.
(2) \uline{\textit{Semi-supervised Learning}}, 
RgFT supports semi-supervised training by combining quality-labeled data with preference-only data.
(3) \uline{\textit{Customizability}},
RgFT allows for the flexible tailoring of rating sub-regions to their corresponding quality levels, making the strategy adaptable to diverse application requirements (see Appendix~\ref{apdix:cust_region_flooding}).

\section{Experiments}\label{sec:exp}

\subsection{Experimental Setup}\label{sec:exp_set}

In our experiments, we curate a dataset of 130k samples for reward model training, drawn primarily from publicly available open-source datasets:
\textbf{Skywork Reward Preference 80K}~\citep{liu-2024-skywork-reward} and
\textbf{UltraFeedback Binarized Preferences}~\citep{cui-2023-ultrafeedback}.
We employ the Qwen2.5-Instruction series of models (7B, 14B, 32B, and 72B)~\citep{qwen-2024-qwen2} as the backbone for training the \ours.
We compare OPRM to different categories of baselines: \textbf{Discriminative RMs}, \textbf{Generative RMs} and \textbf{DeepSeek-RM}.
Following prior work, we evaluate the performance of different methods on various RM benchmarks: \textbf{Reward Bench}~\citep{lambert-2024-rewardbench}, \textbf{PPE-Preference}, \textbf{PPE-Correctness}~\citep{frick-2024-how}, and \textbf{RMB}~\citep{zhou-2024-rmb}. 
We use the standard pair accuracy and Best-of-N evaluation metrics for each benchmark. 
Detailed information on the training preference data, baselines, benchmarks, and evaluation metrics is provided in Appendix~\ref{apdix:exp_setup}.

\begin{table}[t]
	\centering
	% \vspace{-1.0cm}
	\caption{Detailed results of Qwen2.5-7B with different methods on the Role Play benchmark. \textbf{Bold numbers} indicate the best performance. \uline{Underlined numbers} indicate the second best.}
	\vspace{-0.1cm}
	\resizebox{\textwidth}{!}{
		\begin{tabular}{lccccc}
		\toprule
		\textbf{Method} & \textbf{Pair-Accuracy} & \textbf{Best-of-N} & \textbf{Best-of-N-plus}  &  \textbf{Worst-of-N} & \textbf{Overall} \\
		\midrule\midrule
		Random Baseline  & 50.0 & 25.5 & 31.3 & 68.7 & \cellcolor{gray!20}43.9 \\
		\midrule
		\multicolumn{6}{c}{\textit{Training on Role Play Data Only}} \\
		BT Model         & 70.4 & 48.6 & 51.3 & 83.6 & \cellcolor{gray!20}63.5\\
		BT Model - w/ Margin  & 71.0 & 49.3 & 52.3 & 84.2 & \cellcolor{gray!20}64.2 \\
		\textbf{OPRM} (ours)      & 71.3 & 49.4 & 52.5 & 84.1 & \cellcolor{gray!20}64.3 \\
		\textbf{OPRM-RgFT} (ours) & 72.1\scriptsize(\textcolor{darkred}{$\uparrow$0.8}) & 50.7\scriptsize(\textcolor{darkred}{$\uparrow$1.3}) & 53.6\scriptsize(\textcolor{darkred}{$\uparrow$1.1}) & 85.1\scriptsize(\textcolor{darkred}{$\uparrow$1.0}) & \cellcolor{gray!20}65.4\scriptsize(\textcolor{darkred}{$\uparrow$1.1}) \\
		\midrule
		\multicolumn{6}{c}{\textit{Training on Mixed Role Play and General-Domain Data}} \\
		BT Model         & 73.8 & 51.2 & 54.3 & 86.0 & \cellcolor{gray!20}66.3\\
		BT Model - w/ Margin  & 75.3 & 53.4 & 55.7 & 87.2 & \cellcolor{gray!20}67.9 \\
		\textbf{OPRM} (ours)      & 74.4 & 54.1 & 56.1 & 87.8 & \cellcolor{gray!20}68.1 \\
		\textbf{OPRM-RgFT} (ours) & \textbf{75.8}\scriptsize(\textcolor{darkred}{$\uparrow$1.4}) & \textbf{55.8}\scriptsize(\textcolor{darkred}{$\uparrow$1.7}) & \textbf{59.3}\scriptsize(\textcolor{darkred}{$\uparrow$3.2}) & \textbf{89.9}\scriptsize(\textcolor{darkred}{$\uparrow$2.1}) & \cellcolor{gray!20}\textbf{70.2}\scriptsize(\textcolor{darkred}{$\uparrow$2.1}) \\
		\bottomrule
		\end{tabular}
	}
	\label{tab:rp-results-detail}
	\vspace{-0.2cm}
\end{table}

\subsection{Main Results}\label{sec:exp_res}

As shown in Table~\ref{tab:main_results}, we compares the overall results of \ours with different baseline reward models on RM benchmarks.
We present the performance of OPRM with the reported results of public models and the reproduced results of baseline methods from DeepSeek. 
We observe that OPRM outperforms the baseline methods in overall performance, and achieves competitive results against strong public RMs, such as Nemotron-4-340B-Reward and GPT-4o.
Notably, the 14B, 32B, and 72B models surpass all prior leading reward models, improving upon the previous best result by $\textbf{1.3\%}$, $\textbf{2.9\%}$, and $\textbf{1.8\%}$, respectively, despite being significantly smaller in scale.
Moreover, the most significant performance enhancement is observed on the RMB and PPE-Correctness benchmarks, which utilize Best-of-N evaluation to better reflect practical effectiveness on downstream tasks.
We attribute the 32B model's superior performance over the 72B model to the exceptional zero-shot capability of Qwen2.5-32B. This enables it to outperform larger models on the RM benchmark without any fine-tuning.
The more detailed numbers on RewardBench, PPE Correctness, and RMB are in Table~\ref{tab:rb-results-detail}, Table~\ref{tab:ppe-c-results-detail}, and Table~\ref{tab:rmb-results-detail} in Appendix~\ref{apdix:detail_exp}.

\subsection{The Impact of Region Flooding Tuning}
Building upon OPRM, we incorporate RgFT for further experimentation.
We train the OPRM-RgFT series of models using the preference data, further enriched with three defined quality level annotations (\textcolor{good}{\textbf{good}}, \textcolor{normal}{\textbf{normal}}, and \textcolor{bad}{\textbf{bad}}).
Consistent with the advantages of RgFT described in Section~\ref{sec:method_rgft}.

\subsubsection{Fewer Annotations, Better Results}
As presented in Table~\ref{tab:main_results} and Table~\ref{tab:rgft-results}, our evaluation of OPRM-RgFT on four RM benchmarks reveals a notable performance divergence.
On one hand, RgFT consistently improves performance across all model sizes on the PPE benchmarks. 
Notably, OPRM-RgFT-32B achieves SOTA accuracy of $67.3\%$ on the PPE-Correctness benchmark, surpassing all prior leading reward models.
On the other hand, its performance on other benchmarks is inconsistent.
We hypothesize that this discrepancy stems from biases introduced by our annotation strategy for general data (see Appendix~\ref{apdix:reg_anno}). 
This process, involving coarse AI annotation with simple manual correction, is effective for verifiable tasks with explicit correctness labels like PPE-Correctness but likely introduces label noise for other tasks.
Further supporting this claim, our subsequent experiments show that incorporating fine-grained manual annotations leads to consistent performance improvements.

\subsubsection{Beyond Accuracy, Reliable Calibration}
\begin{wraptable}{r}{0.5\textwidth}
    \vspace{-0.55cm}
    \caption{Comparison of accuracy and calibration (ECE-10) on the RewardBench.}
    \label{tab:calibration_results}
    \centering
    \resizebox{\linewidth}{!}{
        \begin{tabular}{lcc}
            \toprule
            Model & Acc (\%) & ECE-10 (\%, $\downarrow$) \\
            \midrule
            Qwen-2.5-32B & 69.56 & 26.72 \\
            OPRM-32B     & 81.04 & 10.62 \\
            \cellcolor{gray!20}\textbf{OPRM-RgFT-32B} & \cellcolor{gray!20}\textbf{90.90} & \cellcolor{gray!20}\textbf{5.18} \\
            \bottomrule
        \end{tabular}
    }
    \vspace{-0.3cm}
\end{wraptable}

While high accuracy is desirable, a reliable reward model must also provide calibrated confidence, particularly for probabilistic approaches.
To strictly quantify this, we measure the Expected Calibration Error (ECE) on the RewardBench dataset.
Given the inherent noise in fine-grained ordinal ratings, we aggregate scores into semantic tiers (\textbf{bad}, \textbf{normal}, \textbf{good}) to compute a robust ECE against ground truth labels verified by GPT-4o and humans.
As shown in Table~\ref{tab:calibration_results}, OPRM-32B significantly outperforms the baseline, reducing ECE by over $60\%$.
Crucially, OPRM-RgFT-32B achieves a minimal ECE of $5.18\%$, an $80.6\%$ relative reduction compared to the baseline.
This demonstrates that RgFT not only boosts ranking performance but also effectively regularizes probability estimates, preventing overconfidence and ensuring alignment with empirical correctness.

\subsubsection{Towards Human-Aligned Score Distributions.}

\begin{wrapfigure}{t}{0.4\textwidth}
    \vspace{-0.3cm}
    \centering
    \includegraphics[width=\linewidth]{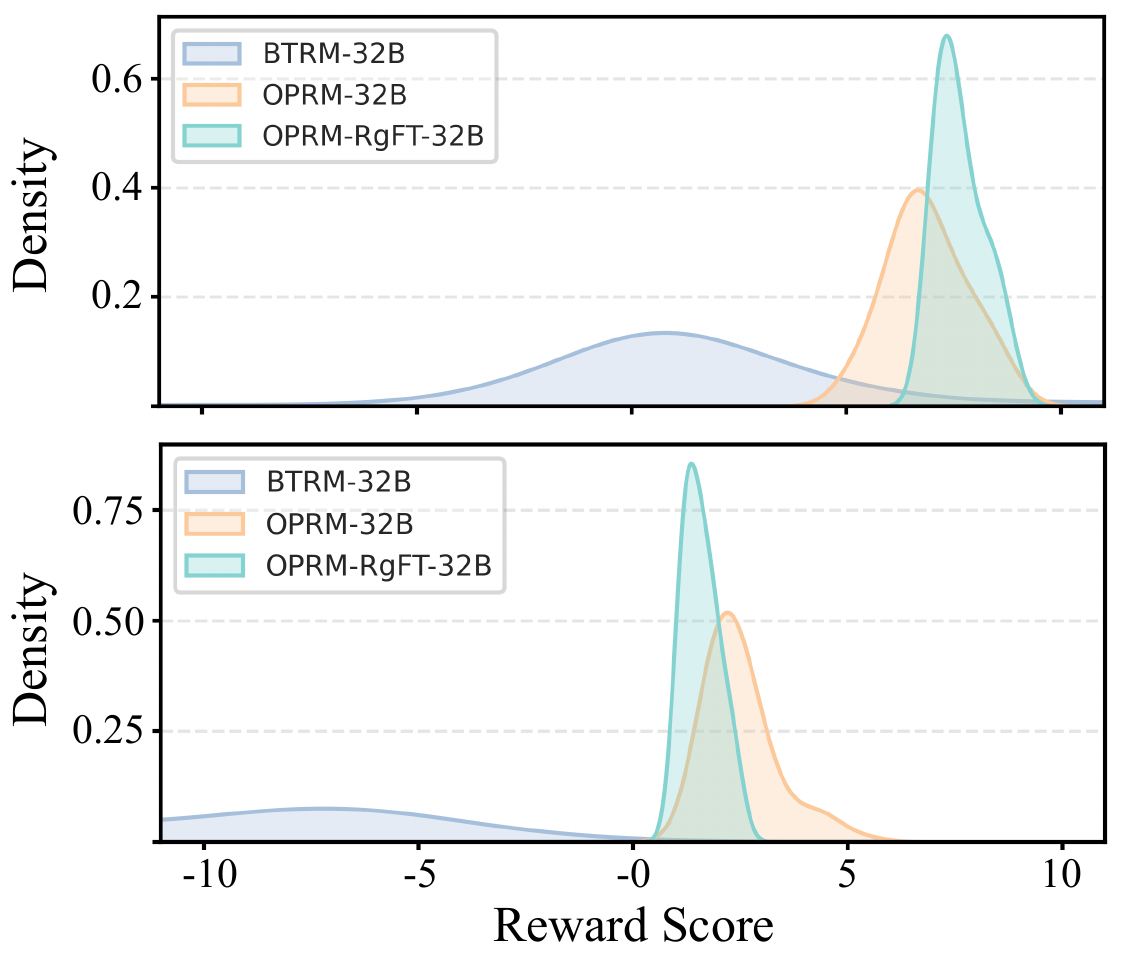}
    \vspace{-0.55cm}
    \caption{Comparison of score distributions for responses of high-quality (\textbf{Top}) and low-quality (\textbf{Bottom}).}
    \label{fig:score_dis}
\vspace{-0.40cm}
\end{wrapfigure}
To evaluate the impact of Region Flooding Tuning on absolute quality assessment, we analyze the score distributions produced by our models.
Specifically, we curated two distinct datasets for this analysis: an \textbf{Absolute-Good Set} with 100 high-quality prompt-response pairs and an \textbf{Absolute-Bad Set} with 100 poor-quality pairs.
These pairs are manually selected by experts based on a multi-faceted evaluation across dimensions such as instruction following, factual accuracy, and helpfulness.
We then score both datasets using three models: the baseline BTRM-32B, our base model OPRM-32B, and its RgFT-enhanced version.
As illustrated in Figure~\ref{fig:score_dis}, the base OPRM-32B already exhibits a basic capacity for absolute quality assessment: within its $[1, 9]$ scoring range, it generally assigns scores above $5$ to good responses and below $5$ to bad ones.
Crucially, OPRM-RgFT-32B significantly enhances this capability.
The RgFT-enhanced model polarizes the score distributions, pushing scores for high-quality responses into the $[7, 9]$ range while confining low-quality ones to $[1, 3]$.
This increased separation makes the score itself a more reliable and interpretable indicator of absolute quality.
Case studies in Appendix~\ref{apdix:case_study} provide detailed scoring examples that further corroborate these findings and demonstrate the improved reliability of RgFT scores.

\begin{figure*}[t]
    \centering
    \includegraphics[width=\linewidth]{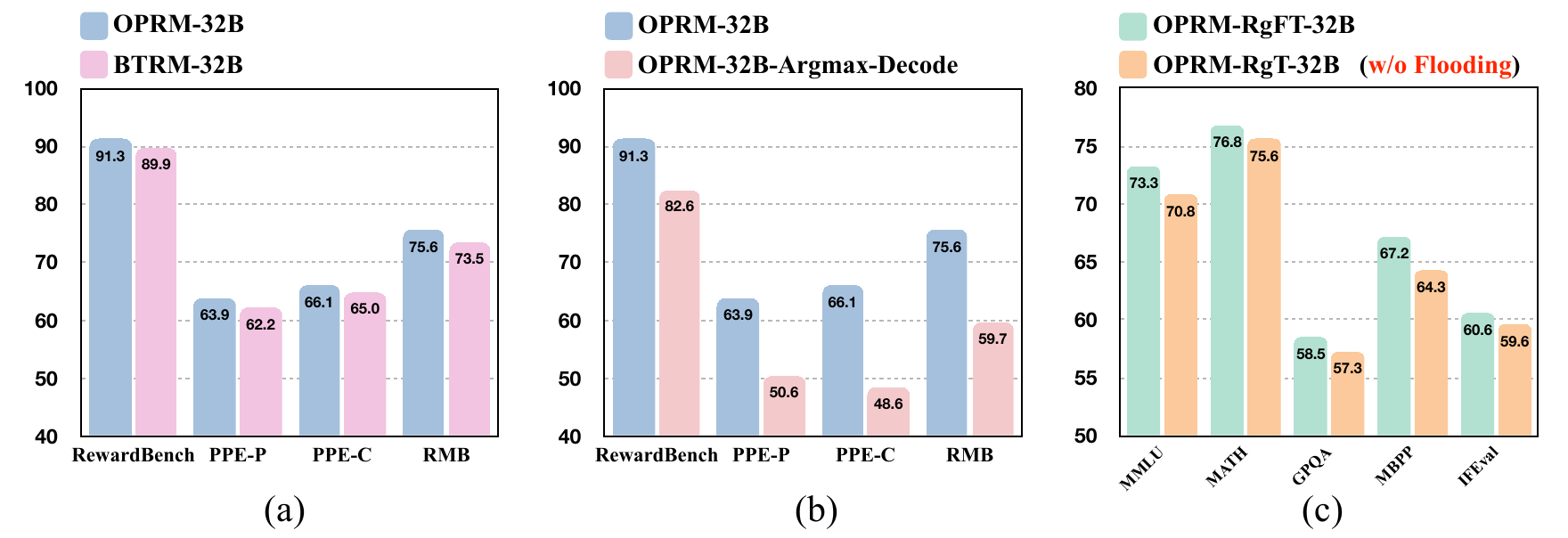}
    \vspace{-0.55cm}
    \caption{\textbf{Ablation Study}: (a) Assessing the superiority of OPRM over the BT Model. 
    (b) Evaluating the efficacy of Weighted Average Decoding. 
    (c) Validating the necessity of Region Flooding.}
    \label{fig:ablation}
    \vspace{-0.2cm}
\end{figure*}

\subsubsection{Semi-supervised Domain Adaptation}
To simulate practical applications and reduce annotation costs, we investigate RgFT's effectiveness in a semi-supervised domain adaptation setting.
Specifically, we curated a training set of 31K role-playing instances with quality-level labels (see Appendix~\ref{apdix:reg_anno} for annotation details) and a mixed dataset by combining these with an equal volume of unlabeled general-domain preference data.
For evaluation, we build a test set of 500 questions, each with 5-10 responses, and designed three new metrics: \textbf{Best-of-N} (top-scoring is \textcolor{good}{\textbf{good}}-level), \textbf{Worst-of-N} (bottom-scoring is \textcolor{bad}{\textbf{bad}}-level), and \textbf{Best-of-N-plus}(top-scoring is not \textcolor{bad}{\textbf{bad}}-level).
As shown in Table~\ref{tab:rp-results-detail}, we benchmark our models against BT and BT-with-Margin baselines (see Appendix~\ref{apdix:bt_with_margin} for detailed formulas) under two settings: training on role-play data only, and on the mixed dataset.
In both settings, OPRM surpasses the baselines, and OPRM-RgFT further improves upon OPRM.
Crucially, incorporating unlabeled general-domain data significantly boosts the performance of OPRM and OPRM-RgFT from $64.3\%$ to $68.1\%$ and $65.4\%$ to $70.2\%$, respectively. 
This demonstrates that RgFT can effectively leverage unlabeled preference data in a semi-supervised manner, offering a cost-effective path for domain adaptation.

\subsection{Ablation Study}
We perform an ablation study to isolate and evaluate the key components of our OPRM and RgFT frameworks. As depicted in Figure~\ref{fig:ablation}, the removal of any single component detrimentally affects the overall performance.

\paragraph{Effectiveness of OPRM Loss.}
Replacing the proposed OPRM loss with the standard Bradley-Terry (BT) loss on an identical Qwen-32B model incurs a consistent performance penalty of $1.1\%$ to $2.1\%$ across all benchmarks (Figure~\ref{fig:ablation}(a)). 
This result empirically substantiates the advantage of modeling the reward as an ordinal variable rather than a strict pairwise preference.

\paragraph{Impact of Decoding Method.}
We contrast our weighted averaging decoding strategy with a naive Argmax baseline, which greedily selects the highest-probability token. As shown in Figure~\ref{fig:ablation}(b), Argmax leads to a substantial $8.7\%$ to $17.5\%$ performance drop. 
This is primarily because Argmax fails to capture fine-grained quality differences, thereby generating an excessive number of ties.

\paragraph{Necessity of the Flooding Mechanism.}
The flooding mechanism is designed to create desirable lower triangular score regions (see Appendix~\ref{apdix:gradient_analysis}). Removing it resulted in a $1.0\%$ to $2.9\%$ performance drop on the PPE Correctness benchmark (see Figure~\ref{fig:ablation}(c)). The degradation is most pronounced when distinguishing between marginally different responses, confirming the mechanism's critical role.

\section{Conclusion}

In this paper, we propose \textit{\textbf{O}rdinal \textbf{P}robabilistic \textbf{R}eward \textbf{M}odel}, a novel paradigm that learns a full probability distribution over an ordinal reward space.
To better anchor these rewards to absolute quality, we further proposed \textit{\textbf{R}e\textbf{g}ion \textbf{F}looding \textbf{T}uning}, a training strategy that leverages quality-level annotations to calibrate the model's probability distribution.
Extensive experiments on four diverse reward modeling benchmarks show that our approach consistently improves performance by $2.9\%$ to $7.4\%$. 
Furthermore, detailed analysis reveals that OPRM is superior to the conventional Bradley-Terry model and that RgFT is crucial for discerning fine-grained quality differences. 
We believe OPRM with RgFT offer a powerful framework for developing more accurate and reliable reward models, a critical step towards building more capable and aligned large language models.

\section*{Acknowledgments}

Min Yang was supported by National Key Research and Development Program of China (2024YFF0908200), National Natural Science Foundation of China (Grant No. 62376262), and Natural Science Foundation of Guangdong Province of China (2024A1515030166, 2025B1515020032).

\normalem
\bibliography{iclr2026_conference}

@article{cui-2023-ultrafeedback,
	title = {Ultrafeedback: Boosting language models with high-quality feedback},
	author = {Cui, Ganqu and Yuan, Lifan and Ding, Ning and Yao, Guanming and Zhu, Wei and Ni, Yuan and Xie, Guotong and Liu, Zhiyuan and Sun, Maosong},
	journal = {arXiv preprint arXiv:2310.01377},
	year = {2023},
	url = {https://arxiv.org/abs/2310.01377}
}

@article{yu-2024-self-generated,
	title = {Self-generated critiques boost reward modeling for language models},
	author = {Yu, Yue and Chen, Zhengxing and Zhang, Aston and Tan, Liang and Zhu, Chenguang and Pang, Richard Yuanzhe and Qian, Yundi and Wang, Xuewei and Gururangan, Suchin and Zhang, Chao and others},
	journal = {arXiv preprint arXiv:2411.16646},
	year = {2024},
	url = {https://arxiv.org/abs/2411.16646}
}

@article{zhang-2024-generative,
	title = {Generative verifiers: Reward modeling as next-token prediction},
	author = {Zhang, Lunjun and Hosseini, Arian and Bansal, Hritik and Kazemi, Mehran and Kumar, Aviral and Agarwal, Rishabh},
	journal = {arXiv preprint arXiv:2408.15240},
	year = {2024},
	url = {https://arxiv.org/abs/2408.15240}
}

@article{liu-2025-inference-time,
	title = {Inference-Time Scaling for Generalist Reward Modeling},
	author = {Liu, Zijun and Wang, Peiyi and Xu, Runxin and Ma, Shirong and Ruan, Chong and Li, Peng and Liu, Yang and Wu, Yu},
	journal = {arXiv preprint arXiv:2504.02495},
	year = {2025},
	url = {https://arxiv.org/abs/2504.02495}
}

@article{liu-2024-skywork-reward,
	title = {Skywork-reward: Bag of tricks for reward modeling in llms},
	author = {Liu, Chris Yuhao and Zeng, Liang and Liu, Jiacai and Yan, Rui and He, Jujie and Wang, Chaojie and Yan, Shuicheng and Liu, Yang and Zhou, Yahui},
	journal = {arXiv preprint arXiv:2410.18451},
	year = {2024},
	url = {https://arxiv.org/abs/2410.18451}
}

@article{wang-2024-helpsteer2,
	title = {Helpsteer2: Open-source dataset for training top-performing reward models},
	author = {Wang, Zhilin and Dong, Yi and Delalleau, Olivier and Zeng, Jiaqi and Shen, Gerald and Egert, Daniel and Zhang, Jimmy J and Sreedhar, Makesh Narsimhan and Kuchaiev, Oleksii},
	journal = {arXiv preprint arXiv:2406.08673},
	year = {2024},
	url = {https://arxiv.org/abs/2406.08673}
}

@article{lambert-2024-rewardbench,
	title = {Rewardbench: Evaluating reward models for language modeling},
	author = {Lambert, Nathan and Pyatkin, Valentina and Morrison, Jacob and Miranda, LJ and Lin, Bill Yuchen and Chandu, Khyathi and Dziri, Nouha and Kumar, Sachin and Zick, Tom and Choi, Yejin and others},
	journal = {arXiv preprint arXiv:2403.13787},
	year = {2024},
	url = {https://arxiv.org/abs/2403.13787}
}

@article{zhou-2024-rmb,
	title = {RMB: Comprehensively Benchmarking Reward Models in LLM Alignment},
	author = {Zhou, Enyu and Zheng, Guodong and Wang, Binghai and Xi, Zhiheng and Dou, Shihan and Bao, Rong and Shen, Wei and Xiong, Limao and Fan, Jessica and Mou, Yurong and others},
	journal = {arXiv preprint arXiv:2410.09893},
	year = {2024},
	url = {https://arxiv.org/abs/2410.09893}
}

@article{frick-2024-how,
	title = {How to Evaluate Reward Models for RLHF},
	author = {Frick, Evan and Li, Tianle and Chen, Connor and Chiang, Wei-Lin and Angelopoulos, Anastasios N and Jiao, Jiantao and Zhu, Banghua and Gonzalez, Joseph E and Stoica, Ion},
	journal = {arXiv preprint arXiv:2410.14872},
	year = {2024},
	url = {https://arxiv.org/abs/2410.14872}
}

@article{ouyang-2022-training,
	title = {Training language models to follow instructions with human feedback},
	author = {Ouyang, Long and Wu, Jeffrey and Jiang, Xu and Almeida, Diogo and Wainwright, Carroll and Mishkin, Pamela and Zhang, Chong and Agarwal, Sandhini and Slama, Katarina and Ray, Alex and others},
	journal = {Advances in neural information processing systems},
	volume = {35},
	pages = {27730--27744},
	year = {2022},
	url = {https://proceedings.neurips.cc/paper_files/paper/2022/hash/b1efde53be364a73914f58805a001731-Abstract-Conference.html}
}

@article{bradley-1952-rank,
	title = {Rank analysis of incomplete block designs: I. The method of paired comparisons},
	author = {Bradley, Ralph Allan and Terry, Milton E},
	journal = {Biometrika},
	volume = {39},
	number = {3/4},
	pages = {324--345},
	year = {1952},
	publisher = {JSTOR}
}

@article{wang-2024-helpsteer2preference,
	title = {Helpsteer2-preference: Complementing ratings with preferences},
	author = {Wang, Zhilin and Bukharin, Alexander and Delalleau, Olivier and Egert, Daniel and Shen, Gerald and Zeng, Jiaqi and Kuchaiev, Oleksii and Dong, Yi},
	journal = {arXiv preprint arXiv:2410.01257},
	year = {2024},
	url = {https://arxiv.org/abs/2410.01257}
}

@article{chen-2025-rmr1,
	title = {RM-R1: Reward Modeling as Reasoning},
	author = {Chen, Xiusi and Li, Gaotang and Wang, Ziqi and Jin, Bowen and Qian, Cheng and Wang, Yu and Wang, Hongru and Zhang, Yu and Zhang, Denghui and Zhang, Tong and others},
	journal = {arXiv preprint arXiv:2505.02387},
	year = {2025},
	url = {https://arxiv.org/abs/2505.02387}
}

@article{dorka-2024-quantile,
	title = {Quantile regression for distributional reward models in rlhf},
	author = {Dorka, Nicolai},
	journal = {arXiv preprint arXiv:2409.10164},
	year = {2024},
	url = {https://arxiv.org/abs/2409.10164}
}

@article{lou-2024-uncertainty,
	title = {Uncertainty-aware reward model: Teaching reward models to know what is unknown},
	author = {Lou, Xingzhou and Yan, Dong and Shen, Wei and Yan, Yuzi and Xie, Jian and Zhang, Junge},
	journal = {arXiv preprint arXiv:2410.00847},
	year = {2024},
	url = {https://arxiv.org/abs/2410.00847}
}

@article{yu-2025-improve,
	title = {Improve LLM-as-a-Judge Ability as a General Ability},
	author = {Yu, Jiachen and Sun, Shaoning and Hu, Xiaohui and Yan, Jiaxu and Yu, Kaidong and Li, Xuelong},
	journal = {arXiv preprint arXiv:2502.11689},
	year = {2025},
	url = {https://arxiv.org/abs/2502.11689}
}

@article{ankner-2024-critique,
	title = {Critique-out-loud reward models},
	author = {Ankner, Zachary and Paul, Mansheej and Cui, Brandon and Chang, Jonathan D and Ammanabrolu, Prithviraj},
	journal = {arXiv preprint arXiv:2408.11791},
	year = {2024},
	url = {https://arxiv.org/abs/2408.11791}
}

@article{zheng-2023-judging,
	title = {Judging llm-as-a-judge with mt-bench and chatbot arena},
	author = {Zheng, Lianmin and Chiang, Wei-Lin and Sheng, Ying and Zhuang, Siyuan and Wu, Zhanghao and Zhuang, Yonghao and Lin, Zi and Li, Zhuohan and Li, Dacheng and Xing, Eric and others},
	journal = {Advances in Neural Information Processing Systems},
	volume = {36},
	pages = {46595--46623},
	year = {2023},
	url = {https://proceedings.neurips.cc/paper_files/paper/2023/hash/91f18a1287b398d378ef22505bf41832-Abstract-Datasets_and_Benchmarks.html}
}

@inproceedings{jiang-2023-llm-blender,
    title = "{LLM}-Blender: Ensembling Large Language Models with Pairwise Ranking and Generative Fusion",
    author = "Jiang, Dongfu  and
      Ren, Xiang  and
      Lin, Bill Yuchen",
    editor = "Rogers, Anna  and
      Boyd-Graber, Jordan  and
      Okazaki, Naoaki",
    booktitle = "Proceedings of the 61st Annual Meeting of the Association for Computational Linguistics (Volume 1: Long Papers)",
    month = jul,
    year = "2023",
    address = "Toronto, Canada",
    publisher = "Association for Computational Linguistics",
    url = "https://aclanthology.org/2023.acl-long.792/",
    doi = "10.18653/v1/2023.acl-long.792",
    pages = "14165--14178"
}

@article{achiam-2023-gpt4,
	title = {Gpt-4 technical report},
	author = {Achiam, Josh and Adler, Steven and Agarwal, Sandhini and Ahmad, Lama and Akkaya, Ilge and Aleman, Florencia Leoni and Almeida, Diogo and Altenschmidt, Janko and Altman, Sam and Anadkat, Shyamal and others},
	journal = {arXiv preprint arXiv:2303.08774},
	year = {2023},
	url = {https://arxiv.org/abs/2303.08774}
}

@inproceedings{sun-2025-rethinking,
	title = {Rethinking reward modeling in preference-based large language model alignment},
	author = {Sun, Hao and Shen, Yunyi and Ton, Jean-Francois},
	booktitle = {The Thirteenth International Conference on Learning Representations},
	year = {2025},
	url = {https://openreview.net/forum?id=rfdblE10qm}
}

@inproceedings{gao-2023-scaling,
	title = {Scaling laws for reward model overoptimization},
	author = {Gao, Leo and Schulman, John and Hilton, Jacob},
	booktitle = {International Conference on Machine Learning},
	pages = {10835--10866},
	year = {2023},
	organization = {PMLR},
	url = {https://proceedings.mlr.press/v202/gao23h.html}
}

@article{shao-2024-deepseekmath,
	title = {Deepseekmath: Pushing the limits of mathematical reasoning in open language models},
	author = {Shao, Zhihong and Wang, Peiyi and Zhu, Qihao and Xu, Runxin and Song, Junxiao and Bi, Xiao and Zhang, Haowei and Zhang, Mingchuan and Li, YK and Wu, Y and others},
	journal = {arXiv preprint arXiv:2402.03300},
	year = {2024},
	url = {https://arxiv.org/abs/2402.03300}
}

@article{bai-2022-training,
	title = {Training a helpful and harmless assistant with reinforcement learning from human feedback},
	author = {Bai, Yuntao and Jones, Andy and Ndousse, Kamal and Askell, Amanda and Chen, Anna and DasSarma, Nova and Drain, Dawn and Fort, Stanislav and Ganguli, Deep and Henighan, Tom and others},
	journal = {arXiv preprint arXiv:2204.05862},
	year = {2022},
	url = {https://arxiv.org/abs/2204.05862}
}

@article{zhong-2025-comprehensive,
	title = {A comprehensive survey of reward models: Taxonomy, applications, challenges, and future},
	author = {Zhong, Jialun and Shen, Wei and Li, Yanzeng and Gao, Songyang and Lu, Hua and Chen, Yicheng and Zhang, Yang and Zhou, Wei and Gu, Jinjie and Zou, Lei},
	journal = {arXiv preprint arXiv:2504.12328},
	year = {2025},
	url = {https://arxiv.org/abs/2504.12328}
}

@article{cai-2024-internlm2,
	title = {Internlm2 technical report},
	author = {Cai, Zheng and Cao, Maosong and Chen, Haojiong and Chen, Kai and Chen, Keyu and Chen, Xin and Chen, Xun and Chen, Zehui and Chen, Zhi and Chu, Pei and others},
	journal = {arXiv preprint arXiv:2403.17297},
	year = {2024},
	url = {https://arxiv.org/abs/2403.17297}
}

@article{team-2024-gemini1.5,
	title = {Gemini 1.5: Unlocking multimodal understanding across millions of tokens of context},
	author = {Team, Gemini and Georgiev, Petko and Lei, Ving Ian and Burnell, Ryan and Bai, Libin and Gulati, Anmol and Tanzer, Garrett and Vincent, Damien and Pan, Zhufeng and Wang, Shibo and others},
	journal = {arXiv preprint arXiv:2403.05530},
	year = {2024},
	url = {https://arxiv.org/abs/2403.05530}
}

@article{qwen-2024-qwen2,
	title = {Qwen2 technical report},
	author = {Team, Qwen},
	journal = {arXiv preprint arXiv:2407.10671},
	year = {2024},
	url = {https://metaso-static.oss-accelerate.aliyuncs.com/metaso/document/64a141da-f885-44ab-9883-94b03b737cdf.pdf}
}

@inproceedings{wang-2024-interpretable,
    title = "Interpretable Preferences via Multi-Objective Reward Modeling and Mixture-of-Experts",
    author = "Wang, Haoxiang  and
      Xiong, Wei  and
      Xie, Tengyang  and
      Zhao, Han  and
      Zhang, Tong",
    editor = "Al-Onaizan, Yaser  and
      Bansal, Mohit  and
      Chen, Yun-Nung",
    booktitle = "Findings of the Association for Computational Linguistics: EMNLP 2024",
    month = nov,
    year = "2024",
    address = "Miami, Florida, USA",
    publisher = "Association for Computational Linguistics",
    url = "https://aclanthology.org/2024.findings-emnlp.620/",
    doi = "10.18653/v1/2024.findings-emnlp.620",
    pages = "10582--10592"
}

@article{sun-2025-probabilistic,
    title = {Probabilistic Uncertain Reward Model},
    author = {Sun, Wangtao and Cheng, Xiang and Yu, Xing and Xu, Haotian and Yang, Zhao and He, Shizhu and Zhao, Jun and Liu, Kang},
    journal = {arXiv preprint arXiv:2503.22480},
    year = {2025},
    url = {https://arxiv.org/abs/2503.22480}
}

@inproceedings{kim-2023-prometheus,
	title = {Prometheus: Inducing fine-grained evaluation capability in language models},
	author = {Kim, Seungone and Shin, Jamin and Cho, Yejin and Jang, Joel and Longpre, Shayne and Lee, Hwaran and Yun, Sangdoo and Shin, Seongjin and Kim, Sungdong and Thorne, James and others},
	booktitle = {The Twelfth International Conference on Learning Representations},
	year = {2023},
	url = {https://openreview.net/forum?id=8euJaTveKw}
}

@article{mahan-2024-generative,
	title = {Generative reward models},
	author = {Mahan, Dakota and Van Phung, Duy and Rafailov, Rafael and Blagden, Chase and Lile, Nathan and Castricato, Louis and Fr{\"a}nken, Jan-Philipp and Finn, Chelsea and Albalak, Alon},
	journal = {arXiv preprint arXiv:2410.12832},
	year = {2024},
	url = {https://arxiv.org/abs/2410.12832}
}

@article{ye-2024-improving,
	title = {Improving reward models with synthetic critiques},
	author = {Ye, Zihuiwen and Greenlee-Scott, Fraser and Bartolo, Max and Blunsom, Phil and Campos, Jon Ander and Gall{\'e}, Matthias},
	journal = {arXiv preprint arXiv:2405.20850},
	year = {2024},
	url = {https://arxiv.org/abs/2405.20850}
}

@article{guo-2025-reward,
	title = {Reward reasoning model},
	author = {Guo, Jiaxin and Chi, Zewen and Dong, Li and Dong, Qingxiu and Wu, Xun and Huang, Shaohan and Wei, Furu},
	journal = {arXiv preprint arXiv:2505.14674},
	year = {2025},
	url = {https://arxiv.org/abs/2505.14674}
}

@article{chen-2025-judgelrm,
	title = {Judgelrm: Large reasoning models as a judge},
	author = {Chen, Nuo and Hu, Zhiyuan and Zou, Qingyun and Wu, Jiaying and Wang, Qian and Hooi, Bryan and He, Bingsheng},
	journal = {arXiv preprint arXiv:2504.00050},
	year = {2025},
	url = {https://arxiv.org/abs/2504.00050}
}

@article{whitehouse-2025-j1,
	title = {J1: Incentivizing thinking in llm-as-a-judge via reinforcement learning},
	author = {Whitehouse, Chenxi and Wang, Tianlu and Yu, Ping and Li, Xian and Weston, Jason and Kulikov, Ilia and Saha, Swarnadeep},
	journal = {arXiv preprint arXiv:2505.10320},
	year = {2025},
	url = {https://arxiv.org/abs/2505.10320}
}

@inproceedings{lightman-2024-let,
	title = {Let's verify step by step},
	author = {Lightman, Hunter and Kosaraju, Vineet and Burda, Yuri and Edwards, Harrison and Baker, Bowen and Lee, Teddy and Leike, Jan and Schulman, John and Sutskever, Ilya and Cobbe, Karl},
	booktitle = {The Twelfth International Conference on Learning Representations},
	year = {2024},
	url = {https://openreview.net/forum?id=v8L0pN6EOi}
}

@article{guo-2025-deepseek,
	title = {Deepseek-r1: Incentivizing reasoning capability in llms via reinforcement learning},
	author = {Guo, Daya and Yang, Dejian and Zhang, Haowei and Song, Junxiao and Zhang, Ruoyu and Xu, Runxin and Zhu, Qihao and Ma, Shirong and Wang, Peiyi and Bi, Xiao and others},
	journal = {arXiv preprint arXiv:2501.12948},
	year = {2025},
	url = {https://arxiv.org/abs/2501.12948}
}

@article{jin-2025-search,
	title = {Search-r1: Training llms to reason and leverage search engines with reinforcement learning},
	author = {Jin, Bowen and Zeng, Hansi and Yue, Zhenrui and Yoon, Jinsung and Arik, Sercan and Wang, Dong and Zamani, Hamed and Han, Jiawei},
	journal = {arXiv preprint arXiv:2503.09516},
	year = {2025},
	url = {https://arxiv.org/abs/2503.09516}
}

@misc{yang-2024-inform, 
	url = {[https://huggingface.co/infly/INF-ORM-Llama3.1-70B](https://huggingface.co/infly/INF-ORM-Llama3.1-70B)},
	title = {INF-ORM-Llama3.1-70B},
	year = {2024},
	author = {Yang, Minghao and Qu, Chao and Tan, Xiaoyu}
}

@article{manski-1977-structure,
    title = {The structure of random utility models},
    author = {Manski, Charles F},
    journal = {Theory and decision},
    volume = {8},
    number = {3},
    pages = {229},
    year = {1977},
    publisher = {Kluwer Academic Publishers}
}

@incollection{cascetta-2009-random,
    title = {Random utility theory},
    author = {Cascetta, Ennio},
    booktitle = {Transportation systems analysis: models and applications},
    pages = {89--167},
    year = {2009},
    publisher = {Springer}
}

@article{liu-2024-deepseek,
	title = {Deepseek-v2: A strong, economical, and efficient mixture-of-experts language model},
	author = {Liu, Aixin and Feng, Bei and Wang, Bin and Wang, Bingxuan and Liu, Bo and Zhao, Chenggang and Dengr, Chengqi and Ruan, Chong and Dai, Damai and Guo, Daya and others},
	journal = {arXiv preprint arXiv:2405.04434},
	year = {2024},
	url = {https://arxiv.org/abs/2405.04434}
}

@article{grattafiori-2024-llama,
	title = {The llama 3 herd of models},
	author = {Grattafiori, Aaron and Dubey, Abhimanyu and Jauhri, Abhinav and Pandey, Abhinav and Kadian, Abhishek and Al-Dahle, Ahmad and Letman, Aiesha and Mathur, Akhil and Schelten, Alan and Vaughan, Alex and others},
	journal = {arXiv preprint arXiv:2407.21783},
	year = {2024},
	url = {https://arxiv.org/abs/2407.21783}
}

@article{hurst-2024-gpt,
	title = {Gpt-4o system card},
	author = {Hurst, Aaron and Lerer, Adam and Goucher, Adam P and Perelman, Adam and Ramesh, Aditya and Clark, Aidan and Ostrow, AJ and Welihinda, Akila and Hayes, Alan and Radford, Alec and others},
	journal = {arXiv preprint arXiv:2410.21276},
	year = {2024},
	url = {https://arxiv.org/abs/2410.21276}
}

@article{ding-2023-enhancing,
	title = {Enhancing chat language models by scaling high-quality instructional conversations},
	author = {Ding, Ning and Chen, Yulin and Xu, Bokai and Qin, Yujia and Zheng, Zhi and Hu, Shengding and Liu, Zhiyuan and Sun, Maosong and Zhou, Bowen},
	journal = {arXiv preprint arXiv:2305.14233},
	year = {2023},
	url = {https://arxiv.org/abs/2305.14233}
}

@inproceedings{xu-2024-wizardlm,
	title = {WizardLM: Empowering large pre-trained language models to follow complex instructions},
	author = {Xu, Can and Sun, Qingfeng and Zheng, Kai and Geng, Xiubo and Zhao, Pu and Feng, Jiazhan and Tao, Chongyang and Lin, Qingwei and Jiang, Daxin},
	booktitle = {The Twelfth International Conference on Learning Representations},
	year = {2024},
	url = {https://openreview.net/forum?id=CfXh93NDgH}
}

@article{lin-2021-truthfulqa,
	title = {Truthfulqa: Measuring how models mimic human falsehoods},
	author = {Lin, Stephanie and Hilton, Jacob and Evans, Owain},
	journal = {arXiv preprint arXiv:2109.07958},
	year = {2021},
	url = {https://arxiv.org/abs/2109.07958}
}

@article{hu-2023-won,
	title = {Won't get fooled again: Answering questions with false premises},
	author = {Hu, Shengding and Luo, Yifan and Wang, Huadong and Cheng, Xingyi and Liu, Zhiyuan and Sun, Maosong},
	journal = {arXiv preprint arXiv:2307.02394},
	year = {2023},
	url = {https://arxiv.org/abs/2307.02394}
}

@article{wei-2021-finetuned,
	title = {Finetuned language models are zero-shot learners},
	author = {Wei, Jason and Bosma, Maarten and Zhao, Vincent Y and Guu, Kelvin and Yu, Adams Wei and Lester, Brian and Du, Nan and Dai, Andrew M and Le, Quoc V},
	journal = {arXiv preprint arXiv:2109.01652},
	year = {2021},
	url = {https://arxiv.org/abs/2109.01652}
}

@InProceedings{liu-2025-reward,
    title = {Reward Modeling with Ordinal Feedback: Wisdom of the Crowd},
    author = {Liu, Shang and Pan, Yu and Chen, Guanting and Li, Xiaocheng},
    booktitle = {Proceedings of the 42nd International Conference on Machine Learning},
    year = {2025},
    url = {https://proceedings.mlr.press/v267/liu25az.html}
}

@article{wang-2025-survey,
    title = {A Survey on Ordinal Regression: Applications, Advances and Prospects},
    author = {Wang, Jinhong and Chen, Jintai and Liu, Jian and Tang, Dongqi and Chen, Danny Z and Wu, Jian},
    journal = {arXiv preprint arXiv:2503.00952},
    year = {2025},
    url = {https://arxiv.org/abs/2503.00952}
}

@inproceedings{fu-2018-deep,
    title = {Deep ordinal regression network for monocular depth estimation},
    author = {Fu, Huan and Gong, Mingming and Wang, Chaohui and Batmanghelich, Kayhan and Tao, Dacheng},
    booktitle = {Proceedings of the IEEE conference on computer vision and pattern recognition},
    pages = {2002--2011},
    year = {2018}
}

@article{rothe-2018-deep,
    title = {Deep expectation of real and apparent age from a single image without facial landmarks},
    author = {Rothe, Rasmus and Timofte, Radu and Van Gool, Luc},
    journal = {International Journal of Computer Vision},
    volume = {126},
    number = {2},
    pages = {144--157},
    year = {2018},
    publisher = {Springer}
}

@inproceedings{diaz-2019-soft,
    title = {Soft labels for ordinal regression},
    author = {Diaz, Raul and Marathe, Amit},
    booktitle = {Proceedings of the IEEE/CVF conference on computer vision and pattern recognition},
    pages = {4738--4747},
    year = {2019}
}

@inproceedings{li-2022-unimodal,
    title = {Unimodal-concentrated loss: Fully adaptive label distribution learning for ordinal regression},
    author = {Li, Qiang and Wang, Jingjing and Yao, Zhaoliang and Li, Yachun and Yang, Pengju and Yan, Jingwei and Wang, Chunmao and Pu, Shiliang},
    booktitle = {Proceedings of the IEEE/CVF Conference on Computer Vision and Pattern Recognition},
    pages = {20513--20522},
    year = {2022}
}

@inproceedings{li-2021-learning,
    title = {Learning probabilistic ordinal embeddings for uncertainty-aware regression},
    author = {Li, Wanhua and Huang, Xiaoke and Lu, Jiwen and Feng, Jianjiang and Zhou, Jie},
    booktitle = {Proceedings of the IEEE/CVF conference on computer vision and pattern recognition},
    pages = {13896--13905},
    year = {2021}
}

@article{wang-2025-adaptive,
    title = {Adaptive thinking via mode policy optimization for social language agents},
    author = {Wang, Minzheng and Li, Yongbin and Wang, Haobo and Zhang, Xinghua and Xu, Nan and Wu, Bingli and Huang, Fei and Yu, Haiyang and Mao, Wenji},
    journal = {arXiv preprint arXiv:2505.02156},
    year = {2025},
    url = {https://arxiv.org/abs/2505.02156}
}

@article{luo-2025-gui,
    title = {Gui-r1: A generalist r1-style vision-language action model for gui agents},
    author = {Luo, Run and Wang, Lu and He, Wanwei and Chen, Longze and Li, Jiaming and Xia, Xiaobo},
    journal = {arXiv preprint arXiv:2504.10458},
    year = {2025},
    url = {https://arxiv.org/abs/2504.10458}
}

@article{li-2025-implicit,
    title = {Implicit Actor Critic Coupling via a Supervised Learning Framework for RLVR},
    author = {Li, Jiaming and Chen, Longze and Gong, Ze and Chen, Yukun and Wang, Lu and He, Wanwei and Luo, Run and Yang, Min},
    journal = {arXiv preprint arXiv:2509.02522},
    year = {2025},
    url = {https://arxiv.org/abs/2509.02522}
}
\bibliographystyle{iclr2026_conference}

% =================================================================================================================================================================

\clearpage

\appendix
\section*{Appendix}

\section{The Use of Large Language Models (LLMs)}
We utilize Large Language Models (LLMs) to aid in the writing and polishing of this manuscript. Specifically, LLMs are employed to correct grammatical errors, improve sentence structure, and enhance the clarity and conciseness of the text. This process is primarily applied to the Introduction, Related Work, and Appendix sections. All scientific contributions, methodologies, and conclusions presented in this paper are the original work of the authors. The LLMs serve solely as a writing-enhancement tool.

\newtheorem{proposition}{Proposition}[section]
\newtheorem{lemma}[proposition]{Lemma}

\section{Bradley-Terry as a Special Case of Probabilistic Reward Modeling}\label{apdix:theory}

In this section, we demonstrate that the Bradley-Terry model for pairwise preferences can be derived from a more general probabilistic reward modeling framework under a specific set of distributional assumptions.

Let $p_\psi(s \mid x,y)$ denote the probability density function of a score $s$ assigned to a response $y$ given a context $x$, where the scoring mechanism is parameterized by $\psi$. Consider two responses for the same context $x$: a chosen response $y_{\mathrm{c}}$ and a rejected response $y_{\mathrm{r}}$. Let $s_{\mathrm{c}}$ and $s_{\mathrm{r}}$ be the random variables for their respective scores, with distributions $p_\psi(s_{\mathrm{c}} \mid x, y_{\mathrm{c}})$ and $p_\psi(s_{\mathrm{r}} \mid x, y_{\mathrm{r}})$. We assume $s_{\mathrm{c}}$ and $s_{\mathrm{r}}$ are conditionally independent given $x, y_{\mathrm{c}}, y_{\mathrm{r}}$.

The probability that $y_{\mathrm{c}}$ is preferred over $y_{\mathrm{r}}$, denoted $P_\psi(y_{\mathrm{c}} \succ y_{\mathrm{r}} \mid x)$, is the probability that the score of the chosen response is greater than that of the rejected one, i.e., $P(s_{\mathrm{c}} > s_{\mathrm{r}})$. This can be expressed generally as:
\begin{equation}
    P_\psi(y_{\mathrm{c}} \succ y_{\mathrm{r}} \mid x) = \int_{-\infty}^{\infty} p_\psi(s_{\mathrm{c}} \mid x, y_{\mathrm{c}}) \left( \int_{-\infty}^{s_{\mathrm{c}}} p_\psi(s_{\mathrm{r}} \mid x, y_{\mathrm{r}}) \,\mathrm{d}s_{\mathrm{r}} \right) \mathrm{d}s_{\mathrm{c}}.
    \label{eq:general_preference_integral}
\end{equation}

We show that this general formulation reduces to the Bradley-Terry model under specific assumptions. 
For clarity, we first establish the functional form of the Bradley-Terry model in terms of the sigmoid function.

\begin{lemma}[Bradley-Terry Model in Sigmoid Form]
\label{lemma:bt_sigmoid}
The Bradley-Terry (BT) model, which defines the preference probability based on underlying quality scores $r_\psi(x, y_{\mathrm{c}})$ and $r_\psi(x, y_{\mathrm{r}})$ as
\begin{equation}
    P_{\text{BT}}(y_{\mathrm{c}} \succ y_{\mathrm{r}} \mid x) = \frac{\exp(r_\psi(x, y_{\mathrm{c}}))}{\exp(r_\psi(x, y_{\mathrm{c}})) + \exp(r_\psi(x, y_{\mathrm{r}}))},
    \label{eq:bt_standard_form}
\end{equation}
is equivalent to the sigmoid function of the difference in scores:
\begin{equation}
    P_{\text{BT}}(y_{\mathrm{c}} \succ y_{\mathrm{r}} \mid x) = \sigma(r_\psi(x, y_{\mathrm{c}}) - r_\psi(x, y_{\mathrm{r}})),
\end{equation}
where $\sigma(z) = 1 / (1 + e^{-z})$ is the standard logistic sigmoid function.
\end{lemma}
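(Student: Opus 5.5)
The plan is a direct algebraic manipulation of the ratio in \eqref{eq:bt_standard_form}. Write $r_{\mathrm{c}} = r_\psi(x, y_{\mathrm{c}})$ and $r_{\mathrm{r}} = r_\psi(x, y_{\mathrm{r}})$ for brevity. Both are finite real numbers, so $\exp(r_{\mathrm{c}}) > 0$. We may therefore divide the numerator and the denominator of \eqref{eq:bt_standard_form} by $\exp(r_{\mathrm{c}})$ without changing the value. This gives
\begin{equation*}
P_{\text{BT}}(y_{\mathrm{c}} \succ y_{\mathrm{r}} \mid x) = \frac{1}{1 + \exp(r_{\mathrm{r}})/\exp(r_{\mathrm{c}})}.
\end{equation*}

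Next, apply the functional equation of the exponential, $\exp(u)/\exp(v) = \exp(u - v)$, to rewrite the ratio as $\exp(-(r_{\mathrm{c}} - r_{\mathrm{r}}))$. The expression becomes
\begin{equation*}
\frac{1}{1 + e^{-(r_{\mathrm{c}} - r_{\mathrm{r}})}}.
\end{equation*}
By the definition $\sigma(z) = 1/(1+e^{-z})$, this is exactly $\sigma(r_{\mathrm{c}} - r_{\mathrm{r}})$, which is the claimed identity. It also matches the second line of \eqref{eq:bt_model} in the preliminaries.

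There is no substantive obstacle here. The only point needing care is that the division is legitimate, i.e.\ $\exp(r_{\mathrm{c}}) \neq 0$. This follows from the exponential being strictly positive on $\mathbb{R}$. I would also remark that the denominator $1 + e^{-z}$ is always at least $1$, so $\sigma$ is well defined everywhere. This sigmoid form is what the subsequent Gumbel argument will match against: it identifies the Bradley--Terry preference probability as the CDF of a standard logistic variable evaluated at $r_{\mathrm{c}} - r_{\mathrm{r}}$.
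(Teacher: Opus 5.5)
Your proof is correct and is essentially the paper's argument. The paper first divides numerator and denominator by $\exp(r_\psi(x,y_{\mathrm{r}}))$ to reach $e^{\Delta}/(1+e^{\Delta})$ with $\Delta = r_\psi(x,y_{\mathrm{c}}) - r_\psi(x,y_{\mathrm{r}})$, then divides again by $e^{\Delta}$; the two steps compose to your single division by $\exp(r_{\mathrm{c}})$, and your explicit check that the exponential is strictly positive, so the division is legitimate, is a point the paper leaves implicit.
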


\begin{proof}
We start from the standard definition of the BT model and manipulate it algebraically. 
By dividing the numerator and the denominator of Eq. (\eqref{eq:bt_standard_form}) by $\exp(r_\psi(x, y_{\mathrm{r}}))$, we obtain:
\begin{align*}
    P_{\text{BT}}(y_{\mathrm{c}} \succ y_{\mathrm{r}} \mid x) &= \frac{\exp(r_\psi(x, y_{\mathrm{c}})) \cdot \exp(-r_\psi(x, y_{\mathrm{r}}))}{(\exp(r_\psi(x, y_{\mathrm{c}})) + \exp(r_\psi(x, y_{\mathrm{r}}))) \cdot \exp(-r_\psi(x, y_{\mathrm{r}}))} \\
    &= \frac{\exp(r_\psi(x, y_{\mathrm{c}}) - r_\psi(x, y_{\mathrm{r}}))}{\exp(r_\psi(x, y_{\mathrm{c}}) - r_\psi(x, y_{\mathrm{r}})) + 1} \\
    &= \frac{\exp(r_\psi(x, y_{\mathrm{c}}) - r_\psi(x, y_{\mathrm{r}}))}{1 + \exp(r_\psi(x, y_{\mathrm{c}}) - r_\psi(x, y_{\mathrm{r}}))}.
\end{align*}
To bring this into the form of the sigmoid function $\sigma(z)$, we can divide the numerator and denominator by $\exp(r_\psi(x, y_{\mathrm{c}}) - r_\psi(x, y_{\mathrm{r}}))$:
\begin{align*}
    P_{\text{BT}}(y_{\mathrm{c}} \succ y_{\mathrm{r}} \mid x) &= \frac{1}{\frac{1 + \exp(r_\psi(x, y_{\mathrm{c}}) - r_\psi(x, y_{\mathrm{r}}))}{\exp(r_\psi(x, y_{\mathrm{c}}) - r_\psi(x, y_{\mathrm{r}}))}} \\
    &= \frac{1}{\exp(-(r_\psi(x, y_{\mathrm{c}}) - r_\psi(x, y_{\mathrm{r}}))) + 1} \\
    &= \sigma(r_\psi(x, y_{\mathrm{c}}) - r_\psi(x, y_{\mathrm{r}})).
\end{align*}
This completes the proof of the lemma.
\end{proof}

With this lemma, we can prove the main proposition.

\begin{proposition}
\label{prop:main_equivalence}
The general preference probability $P_\psi(y_{\mathrm{c}} \succ y_{\mathrm{r}} \mid x)$ defined in Eq.  (\ref{eq:general_preference_integral}) is equivalent to the Bradley-Terry model if the following assumptions hold:
\begin{enumerate}
    \item The score difference $\Delta s_{\psi} \triangleq s_{\mathrm{c}} - s_{\mathrm{r}}$ follows a logistic distribution.
    \item The mean of this logistic distribution is the difference of deterministic underlying quality scores: $\mu = r_\psi(x, y_{\mathrm{c}}) - r_\psi(x, y_{\mathrm{r}})$.
    \item The scale parameter of the logistic distribution is unity ($s=1$).
\end{enumerate}
\end{proposition}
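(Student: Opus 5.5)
The plan is to rewrite the double integral in Eq. (\ref{eq:general_preference_integral}) as a single probability statement about the score difference, and then evaluate it with the logistic CDF. By conditional independence of $s_{\mathrm{c}}$ and $s_{\mathrm{r}}$, the inner integral $\int_{-\infty}^{s_{\mathrm{c}}} p_\psi(s_{\mathrm{r}} \mid x, y_{\mathrm{r}})\,\mathrm{d}s_{\mathrm{r}}$ equals $P(s_{\mathrm{r}} < s_{\mathrm{c}} \mid s_{\mathrm{c}})$. Integrating it against the density of $s_{\mathrm{c}}$ (tower property) gives
\[
P_\psi(y_{\mathrm{c}} \succ y_{\mathrm{r}} \mid x) = P(s_{\mathrm{c}} - s_{\mathrm{r}} > 0) = P(\Delta s_\psi > 0).
\]
This step uses only the general formulation and involves no distributional assumption.

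Next, invoke assumptions 1--3, so that $\Delta s_\psi$ is logistic with location $\mu = r_\psi(x,y_{\mathrm{c}}) - r_\psi(x,y_{\mathrm{r}})$ and unit scale. Its CDF is then $F(z) = \sigma(z-\mu)$. Because continuity gives no atom at $0$,
\[
P(\Delta s_\psi > 0) = 1 - F(0) = 1 - \sigma(-\mu) = \sigma(\mu),
\]
where the last equality uses the symmetry identity $\sigma(-z) = 1 - \sigma(z)$, which I will verify in one line from $\sigma(z)=1/(1+e^{-z})$. Lemma~\ref{lemma:bt_sigmoid} then shows that $\sigma(r_\psi(x,y_{\mathrm{c}}) - r_\psi(x,y_{\mathrm{r}}))$ coincides with the standard Bradley-Terry form (\ref{eq:bt_standard_form}), which completes the equivalence.

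The only delicate point is notational and interpretive rather than technical. Assumption 3 writes the scale as ``$s=1$'', which clashes with the score variable $s$, so I will rename the scale parameter $\beta$ and state $\beta=1$ explicitly. I will also remark how this connects to the Gumbel statement in the main text: if $s_{\mathrm{c}}$ and $s_{\mathrm{r}}$ are independent Gumbel variables with locations $r_\psi(x,y_{\mathrm{c}})$ and $r_\psi(x,y_{\mathrm{r}})$ and unit scale, their difference is logistic with the required mean and unit scale, so the hypotheses are realizable. I would cite this standard fact rather than reprove it. The proof itself needs only assumptions 1--3 on the difference.
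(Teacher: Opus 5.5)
Your proposal is correct and follows essentially the same route as the paper. Both reduce the preference probability to $P(\Delta s_\psi > 0)$, evaluate $1 - F(0)$ with the logistic CDF to get $\sigma(\mu)$, and then invoke Lemma~\ref{lemma:bt_sigmoid}; the only difference is that the paper carries a general scale to obtain $\sigma(\mu/s)$ before setting $s=1$, whereas your independence/Fubini justification of the reduction from Eq.~(\ref{eq:general_preference_integral}) and your no-atom remark supply steps the paper only asserts.
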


\begin{proof}
The preference probability is the probability that the score of the chosen response exceeds that of the rejected one. 
This can be expressed in terms of the score difference random variable $\Delta s_{\psi} = s_{\mathrm{c}} - s_{\mathrm{r}}$:
\begin{equation}
    P_\psi(y_{\mathrm{c}} \succ y_{\mathrm{r}} \mid x) = P(s_{\mathrm{c}} > s_{\mathrm{r}}) = P(\Delta s_{\psi} > 0).
\end{equation}
\textbf{Assumption 1} states that $\Delta s_{\psi}$ follows a logistic distribution. 
The cumulative distribution function (CDF) of a logistic random variable $Z$ with mean $\mu$ and scale $s$ is given by $F_Z(z) = (1 + e^{-(z - \mu)/s})^{-1}$. Therefore, we can compute the preference probability as:
\begin{align}
    P(\Delta s_{\psi} > 0) &= 1 - P(\Delta s_{\psi} \le 0) \nonumber \\
                         &= 1 - F_{\Delta s_{\psi}}(0) \nonumber \\
                         &= 1 - \frac{1}{1 + e^{-(0 - \mu)/s}} \nonumber \\
                         &= 1 - \frac{1}{1 + e^{\mu/s}} \nonumber \\
                         &= \frac{(1 + e^{\mu/s}) - 1}{1 + e^{\mu/s}} = \frac{e^{\mu/s}}{1 + e^{\mu/s}} \nonumber \\
                         &= \frac{1}{1 + e^{-\mu/s}}. \label{eq:prob_delta_s_logistic}
\end{align}
This final expression is precisely the sigmoid function, $\sigma(\mu/s)$.

Then, we apply the remaining assumptions. \textbf{Assumption 2} posits that the mean of the distribution is $\mu = r_\psi(x, y_{\mathrm{c}}) - r_\psi(x, y_{\mathrm{r}})$. \textbf{Assumption 3} sets the scale parameter to unity, $s=1$. Substituting these into our result from Eq. (\ref{eq:prob_delta_s_logistic}) yields:
\begin{equation}
    P_\psi(y_{\mathrm{c}} \succ y_{\mathrm{r}} \mid x) = \frac{1}{1 + e^{-(r_\psi(x, y_{\mathrm{c}}) - r_\psi(x, y_{\mathrm{r}}))}} = \sigma(r_\psi(x, y_{\mathrm{c}}) - r_\psi(x, y_{\mathrm{r}})).
    \label{eq:derived_preference_probability}
\end{equation}
From Lemma~\ref{lemma:bt_sigmoid}, we know that the Bradley-Terry model also simplifies to $\sigma(r_\psi(x, y_{\mathrm{c}}) - r_\psi(x, y_{\mathrm{r}}))$. 
Since the probabilistic reward model under the specified assumptions and the Bradley-Terry model both yield the identical functional form, we have shown that the latter is a special case of the former.
\end{proof}

\section{Gradient Analysis of the Preference Probability}\label{apdix:gradient_analysis}

In this section, we conduct a formal gradient-based analysis to demonstrate that maximizing the preference probability, $P_\psi(y_{\mathrm{c}} \succ y_{\mathrm{r}} \mid x)$, incentivizes the underlying probabilistic model to maximally separate the score distributions of the chosen and rejected responses.

\begin{proposition}[Optimization Incentive of Preference Maximization]
\label{prop:gradient_dynamics}
Let the scores for responses be drawn from a discrete set $\{a, a+1, \dots, b\}$. Let $p_{\mathrm{c}}(k) \triangleq p_\psi(s_{\mathrm{c}}=k \mid x, y_{\mathrm{c}})$ and $p_{\mathrm{r}}(k) \triangleq p_\psi(s_{\mathrm{r}}=k \mid x, y_{\mathrm{r}})$ be the respective probability mass functions (PMFs). Maximizing the preference probability $P(s_{\mathrm{c}} > s_{\mathrm{r}})$ with respect to the variables $\{p_{\mathrm{c}}(k)\}$ and $\{p_{\mathrm{r}}(k)\}$ under the constraints $\sum_k p_{\mathrm{c}}(k) = 1$ and $\sum_k p_{\mathrm{r}}(k) = 1$ creates the following incentives:
\begin{enumerate}
    \item For the chosen response $y_{\mathrm{c}}$, shifting probability mass from any score $k$ to a higher score $k+1$ will increase or maintain the objective value.
    \item For the rejected response $y_{\mathrm{r}}$, shifting probability mass from any score $k+1$ to a lower score $k$ will increase or maintain the objective value.
\end{enumerate}
This implies that the optimization process drives the PMF of $y_{\mathrm{c}}$ towards the maximum score $b$ and the PMF of $y_{\mathrm{r}}$ towards the minimum score $a$.
\end{proposition}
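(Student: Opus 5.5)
We write the objective as a bilinear form in the two PMFs and then compute directly how it changes when a small amount of mass is moved between adjacent scores. Concretely,
\begin{equation*}
J \triangleq P(s_{\mathrm{c}} > s_{\mathrm{r}}) = \sum_{i=a}^{b}\sum_{j=a}^{b} p_{\mathrm{c}}(i)\, p_{\mathrm{r}}(j)\, \mathbbm{1}(i>j),
\end{equation*}
which is Eq. (\ref{eq:ordinal_probabilistic_reward_modeling}) with the inner sum written as an indicator. The partial derivatives are
\begin{equation*}
\frac{\partial J}{\partial p_{\mathrm{c}}(k)} = \sum_{j<k} p_{\mathrm{r}}(j) = F_{\mathrm{r}}(k-1), \qquad \frac{\partial J}{\partial p_{\mathrm{r}}(k)} = \sum_{i>k} p_{\mathrm{c}}(i) = 1 - F_{\mathrm{c}}(k).
\end{equation*}
This recovers Eq. (\ref{eq:pmfs}). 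Here $F_{\mathrm{c}}, F_{\mathrm{r}}$ denote the CDFs.

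To respect the simplex constraints, we do not use unconstrained gradients. Instead we consider feasible transfer directions. For the chosen response, move mass $\delta\in(0, p_{\mathrm{c}}(k)]$ from $k$ to $k+1$, i.e.\ $p_{\mathrm{c}}(k)\mapsto p_{\mathrm{c}}(k)-\delta$ and $p_{\mathrm{c}}(k+1)\mapsto p_{\mathrm{c}}(k+1)+\delta$. This preserves $\sum_k p_{\mathrm{c}}(k)=1$ and nonnegativity. Since $J$ is linear in $p_{\mathrm{c}}$ for fixed $p_{\mathrm{r}}$, the change is exact rather than first-order:
\begin{equation*}
\Delta J = \delta\bigl(F_{\mathrm{r}}(k) - F_{\mathrm{r}}(k-1)\bigr) = \delta\, p_{\mathrm{r}}(k) \ge 0.
\end{equation*}
This proves item 1, with strict increase whenever $p_{\mathrm{r}}(k)>0$.

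Symmetrically, for the rejected response, move $\delta\in(0,p_{\mathrm{r}}(k+1)]$ from $k+1$ to $k$. Linearity in $p_{\mathrm{r}}$ gives
\begin{equation*}
\Delta J = \delta\bigl((1-F_{\mathrm{c}}(k)) - (1-F_{\mathrm{c}}(k+1))\bigr) = \delta\, p_{\mathrm{c}}(k+1) \ge 0.
\end{equation*}
This proves item 2.

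For the concluding implication, note that any PMF can be moved to the point mass $\delta_b$ by a finite sequence of such upward adjacent transfers. By item 1, each transfer does not decrease $J$, so $J(\delta_b, p_{\mathrm{r}}) \ge J(p_{\mathrm{c}}, p_{\mathrm{r}})$ for every fixed $p_{\mathrm{r}}$. Likewise, downward transfers of $p_{\mathrm{r}}$ lead to $\delta_a$. Hence $(\delta_b,\delta_a)$ attains $J=1$, which is the global maximum.

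The main obstacle is the precise sense of "drives towards". The transfers are only weakly improving, e.g.\ when $p_{\mathrm{r}}(k)=0$. So we would state the claim as a monotone-improvement and maximizer statement rather than a claim about gradient flow trajectories. We would also remark that under the softmax parametrization, gradients through the logits are positive combinations of these feasible directions. That remark is heuristic and not needed for the proposition as stated.
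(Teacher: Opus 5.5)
Your proposal is correct and follows essentially the same route as the paper. Both compute the partial derivatives $P(s_{\mathrm{r}}<k)$ and $P(s_{\mathrm{c}}>k)$ of the bilinear objective, then evaluate the change under an adjacent mass transfer, which is exact by linearity and equals $\delta\, p_{\mathrm{r}}(k)$ and $\delta\, p_{\mathrm{c}}(k+1)$ respectively. Your two additions do not change the argument but tighten it: capping $\delta$ by the available mass keeps the PMF feasible, and chaining finitely many transfers shows that $(\delta_b,\delta_a)$ attains the global maximum $J=1$ (for $a<b$), which makes the paper's informal ``drives towards'' conclusion precise.
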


\begin{proof}
The preference probability $P \triangleq P_\psi(y_{\mathrm{c}} \succ y_{\mathrm{r}} \mid x)$ for discrete scores is given by:
\begin{equation}
    P = \sum_{i=a}^{b} p_{\mathrm{c}}(i) P(s_{\mathrm{r}} < i) = \sum_{i=a}^{b} p_{\mathrm{c}}(i) \left( \sum_{j=a}^{i-1} p_{\mathrm{r}}(j) \right).
    \label{eq:discrete_preference_prob}
\end{equation}
We analyze the gradient of $P$ with respect to the probability mass at each score for $y_{\mathrm{c}}$ and $y_{\mathrm{r}}$ separately.

\paragraph{Part 1: Incentive for the Chosen Response Score ($y_{\mathrm{c}}$).}
We first compute the partial derivative of $P$ with respect to $p_{\mathrm{c}}(k)$ for some score $k \in \{a, \dots, b\}$. From Eq. (\ref{eq:discrete_preference_prob}), only the term where $i=k$ depends on $p_{\mathrm{c}}(k)$, so:
\begin{equation}
    \frac{\partial P}{\partial p_{\mathrm{c}}(k)} = \frac{\partial}{\partial p_{\mathrm{c}}(k)} \left[ p_{\mathrm{c}}(k) \sum_{j=a}^{k-1} p_{\mathrm{r}}(j) \right] = \sum_{j=a}^{k-1} p_{\mathrm{r}}(j) = P(s_{\mathrm{r}} < k).
    \label{eq:grad_pc}
\end{equation}
This derivative represents the sensitivity of the objective to an increase in probability mass at score $k$. To understand the incentive for shifting mass, consider moving an infinitesimal probability mass $\epsilon > 0$ from a score $k$ to a higher score $k+1$. This corresponds to a change in the PMF: $p_{\mathrm{c}}(k) \to p_{\mathrm{c}}(k) - \epsilon$ and $p_{\mathrm{c}}(k+1) \to p_{\mathrm{c}}(k+1) + \epsilon$. The resulting change in $P$, denoted $\Delta P$, can be approximated by the first-order Taylor expansion (which is exact since $P$ is linear in $p_{\mathrm{c}}$):
\begin{align}
    \Delta P &\approx \epsilon \frac{\partial P}{\partial p_{\mathrm{c}}(k+1)} - \epsilon \frac{\partial P}{\partial p_{\mathrm{c}}(k)} \nonumber \\
    &= \epsilon \left( P(s_{\mathrm{r}} < k+1) - P(s_{\mathrm{r}} < k) \right) \quad \text{(using Eq.~\eqref{eq:grad_pc})} \nonumber \\
    &= \epsilon \cdot P(s_{\mathrm{r}} = k) \nonumber \\
    &= \epsilon \cdot p_{\mathrm{r}}(k).
\end{align}
Since probabilities are non-negative, $p_{\mathrm{r}}(k) \ge 0$, and we defined $\epsilon > 0$, it follows that $\Delta P \ge 0$. This demonstrates that any shift of probability mass to a higher score for $y_{\mathrm{c}}$ is guaranteed to be a non-decreasing change in the objective function. This creates a persistent optimization pressure to move the entire distribution $p_{\mathrm{c}}$ towards the maximum score $b$.

\paragraph{Part 2: Incentive for the Rejected Response Score ($y_{\mathrm{r}}$).}
To analyze the effect of $p_{\mathrm{r}}(k)$, it is more convenient to rewrite Eq. (\ref{eq:discrete_preference_prob}) by swapping the order of summation:
\begin{equation}
    P = \sum_{j=a}^{b-1} p_{\mathrm{r}}(j) \left( \sum_{i=j+1}^{b} p_{\mathrm{c}}(i) \right).
    \label{eq:discrete_preference_prob_swapped}
\end{equation}
The partial derivative of $P$ with respect to $p_{\mathrm{r}}(k)$ for $k \in \{a, \dots, b-1\}$ is:
\begin{equation}
    \frac{\partial P}{\partial p_{\mathrm{r}}(k)} = \frac{\partial}{\partial p_{\mathrm{r}}(k)} \left[ p_{\mathrm{r}}(k) \sum_{i=k+1}^{b} p_{\mathrm{c}}(i) \right] = \sum_{i=k+1}^{b} p_{\mathrm{c}}(i) = P(s_{\mathrm{c}} > k).
    \label{eq:grad_pr}
\end{equation}
Now, consider shifting an infinitesimal probability mass $\epsilon > 0$ from a score $k+1$ to a \emph{lower} score $k$. This corresponds to the change: $p_{\mathrm{r}}(k) \to p_{\mathrm{r}}(k) + \epsilon$ and $p_{\mathrm{r}}(k+1) \to p_{\mathrm{r}}(k+1) - \epsilon$. The resulting change in $P$ is:
\begin{align}
    \Delta P &\approx \epsilon \frac{\partial P}{\partial p_{\mathrm{r}}(k)} - \epsilon \frac{\partial P}{\partial p_{\mathrm{r}}(k+1)} \nonumber \\
    &= \epsilon \left( P(s_{\mathrm{c}} > k) - P(s_{\mathrm{c}} > k+1) \right) \quad \text{(using Eq.~\eqref{eq:grad_pr})} \nonumber \\
    &= \epsilon \cdot P(s_{\mathrm{c}} = k+1) \nonumber \\
    &= \epsilon \cdot p_{\mathrm{c}}(k+1).
\end{align}
Since $p_{\mathrm{c}}(k+1) \ge 0$ and $\epsilon > 0$, we have $\Delta P \ge 0$. This shows that shifting probability mass to a lower score for $y_{\mathrm{r}}$ is always a non-decreasing change. This creates a consistent optimization pressure to move the distribution $p_{\mathrm{r}}$ towards the minimum score $a$.

Combining both parts, we have formally shown that maximizing the preference probability $P(s_{\mathrm{c}} > s_{\mathrm{r}})$ drives the model to separate the score distributions by pushing the mass of $p_{\mathrm{c}}$ towards the highest possible score and the mass of $p_{\mathrm{r}}$ towards the lowest possible score.
\end{proof}

\section{Detailed Experimental Setup} \label{apdix:exp_setup}

\paragraph{Training Preference Data.}\label{sec:exp_data}
We curate a dataset of 130k samples for reward model training, drawn primarily from publicly available open-source datasets:
\textbf{Skywork Reward Preference 80K}~\citep{liu-2024-skywork-reward} is a high-quality, pairwise preference dataset that spans multiple domains, including chat, safety, mathematics, and code. 
It employs advanced data filtering techniques to ensure the reliability of preferences across different tasks. 
\textbf{UltraFeedback Binarized Preferences}~\citep{cui-2023-ultrafeedback} is a large-scale, fine-grained, and diverse preference dataset designed for training powerful reward and critic models. 
It comprises approximately 64k prompts from various sources, including UltraChat~\citep{ding-2023-enhancing}, ShareGPT, Evol-Instruct~\citep{xu-2024-wizardlm}, TruthfulQA~\citep{lin-2021-truthfulqa}, FalseQA~\citep{hu-2023-won}, and FLAN~\citep{wei-2021-finetuned}.
Each prompt is used to query multiple LLMs to generate four distinct responses, resulting in a total of 256k samples. 

\paragraph{Baselines.}
In our main experiments, we employ the Qwen2.5-Instruction series of models (7B, 14B, 32B, and 72B)~\citep{qwen-2024-qwen2} as the backbone for training the \ours.
We compare OPRM to different categories of baselines: 
(1) \textbf{Discriminative RMs}, including Skywork-Reward~\citep{liu-2024-skywork-reward}, ArmoRM~\citep{wang-2024-interpretable},  InternLM-20B-Reward~\citep{cai-2024-internlm2}, and Nemotron-4-340B-Reward~\citep{wang-2024-helpsteer2}. 
(2) \textbf{Generative RMs}, including DeepSeek-V2.5~\citep{liu-2024-deepseek}, Gemini-1.5-Pro~\citep{team-2024-gemini1.5}, LLaMA-3.1-70B~\citep{grattafiori-2024-llama}, Claude-3.5-sonnet, and GPT-4o~\citep{hurst-2024-gpt}.
(3) \textbf{DeepSeek-RM}, a collection of baselines re-implemented by DeepSeek, including LLM-as-A-Judge~\citep{zheng-2023-judging}, DeepSeek-BTRM~\citep{bradley-1952-rank}, DeepSeek-PairRM~\citep{jiang-2023-llm-blender}, CLoud-Gemma-2~\citep{ankner-2024-critique} and DeepSeek-GRM~\citep{liu-2025-inference-time}.

\paragraph{Benchmarks and Evaluation Metrics.}
Following prior work, we evaluate the performance of different methods on various RM benchmarks of different domains: \textbf{Reward Bench}~\citep{lambert-2024-rewardbench}, \textbf{PPE-Preference}, \textbf{PPE-Correctness}~\citep{frick-2024-how}, \textbf{RMB}~\citep{zhou-2024-rmb}. 
We use the standard Best-of-N evaluation metrics for each benchmark: the accuracy of picking the best response from a set of responses. 
Specifically, Reward Bench and PPE Preference involve pairwise comparisons, with each prompt featuring two candidate responses. 
In contrast, PPE Correctness is designed for a large-scale Best-of-N evaluation, presenting 32 responses for each prompt. 
RMB is a hybrid, incorporating both pairwise comparison tasks and a Best-of-5 selection format.

\section{Sensitivity Analysis on Boundary and Bin Configurations}
\label{apdix:sensitivity_analysis}

In this section, we investigate the robustness of OPRM-RgFT to variations in ordinal bin definitions and boundary placements. Specifically, we aim to verify that the method's performance is primarily driven by the underlying probabilistic framework rather than specific hyperparameter choices regarding the ordinal scale.

To this end, we trained a variant of OPRM-RgFT-32B using an expanded scale $S'=\{0, \dots, 9\}$ with \textit{irregular boundaries}: \textbf{bad} $\{0,1,2,3\}$, \textbf{normal} $\{4,5\}$, and \textbf{good} $\{6,7,8,9\}$. This setup introduces asymmetry and changes the cardinality of the semantic sets, contrasting with our default uniform configuration which employs a symmetric partition of the scale $S=\{1, \dots, 9\}$ (\textbf{bad} $\{1,2,3\}$, \textbf{normal} $\{4,5,6\}$, \textbf{good} $\{7,8,9\}$).

As presented in Table~\ref{tab:sensitivity_analysis}, the performance deviation between the default uniform setting and the irregular variant is negligible, with the difference in the Overall score being less than $0.1\%$. This empirical evidence demonstrates that OPRM-RgFT is robust to boundary shifts and does not rely on uniform partitions to achieve high performance.

\begin{table}[h]
    \centering
    \caption{Performance comparison between the default uniform configuration and an irregular boundary variant. The results indicate that the method is highly robust to binning strategies.}
    \label{tab:sensitivity_analysis}
    \vspace{0.2cm}
    \resizebox{0.95\textwidth}{!}{
        \begin{tabular}{lccccc}
            \toprule
            \textbf{Configuration} & \textbf{RewardBench} & \textbf{PPE-P} & \textbf{PPE-C} & \textbf{RMB} & \textbf{Overall} \\
            \midrule
            \textbf{Irregular Variant} (Scale $0\dots9$) \\
            \small{~~bad $\{0,1,2,3\}$, normal $\{4,5\}$, good $\{6,7,8,9\}$} & 89.1 & 64.1 & 67.7 & 74.4 & 73.8 \\
            \midrule
            \textbf{Default Uniform} (Scale $1\dots9$) \\
            \small{~~bad $\{1,2,3\}$, normal $\{4,5,6\}$, good $\{7,8,9\}$} & 88.9 & 64.6 & 67.3 & 74.8 & 73.9 \\
            \bottomrule
        \end{tabular}
    }
\end{table}

Despite the demonstrated robustness to irregular boundaries, we adhere to the default uniform configuration for two principled reasons related to efficiency and priors. First, regarding \textbf{computational efficiency}, we utilize the range $[1, 9]$ to maximize granularity while ensuring each ordinal score maps to a single token. Mainstream LLM tokenizers typically treat digits $\{0, \dots, 9\}$ as individual tokens, whereas values $\ge 10$ are decomposed into multiple tokens. Restricting the support set to single tokens allows OPRM to compute the full probability distribution in a single forward pass, avoiding the prohibitive computational costs associated with computing joint probabilities over multi-token sequences. Second, concerning \textbf{geometric simplicity}, we employ a uniform partition ($3\times3$ regions) as a neutral prior to minimize inductive bias. This uniformity aligns with the theoretical design of Region Flooding Tuning, allowing probability mass to flood into a symmetric lower triangular geometry of consistent size. In the absence of domain-specific knowledge suggesting that \textbf{good} samples require finer granularity than \textbf{bad} ones, a symmetric division simplifies hyperparameter selection and ensures balanced gradient pressure across different quality levels.

\section{Bradley-Terry Loss with Margin} \label{apdix:bt_with_margin}
Inspired by INF-ORM~\citep{yang-2024-inform}, which employs GPT-4o to evaluate the preference margin between chosen and rejected responses, we annotate each pair in our dataset with a margin label. The original evaluation in INF-ORM follows these rules: (1) If the chosen answer is much better than rejected answer, set margin to $10$; (2) If the chosen answer is better than the rejected answer, set margin to $3$; (3) If the chosen answer is slightly better than rejected answer, set margin to $1$.

Analogously, we define margins based on the combination of quality-level annotations. Specifically, pairs with the same quality level, such as <\textcolor{good}{\textbf{good}}, \textcolor{good}{\textbf{good}}>, <\textcolor{normal}{\textbf{normal}}, \textcolor{normal}{\textbf{normal}}>, and <\textcolor{bad}{\textbf{bad}}, \textcolor{bad}{\textbf{bad}}>, are assigned a margin of $1$. Pairs with adjacent quality levels, namely <\textcolor{good}{\textbf{good}}, \textcolor{normal}{\textbf{normal}}> and <\textcolor{normal}{\textbf{normal}}, \textcolor{bad}{\textbf{bad}}>, are assigned a margin of $3$. Finally, a margin of $10$ is assigned to pairs with distant quality levels, like <\textcolor{good}{\textbf{good}}, \textcolor{bad}{\textbf{bad}}>.

After that, the Bradley-Terry Loss with Margin is defined as:

\begin{equation} \label{eq:bt_loss_with_margin}
    \mathcal{L} (r_\psi) = -\mathbb{E}_{(x, y_{\mathrm{c}}, y_{\mathrm{r}}) \sim \mathcal{D_{\mathrm{rm}}}} [m(x, y_{\mathrm{c}}, y_{\mathrm{r}}) \cdot \log \sigma(r_\psi(x, y_{\mathrm{c}}) - r_\psi(x, y_{\mathrm{r}}))],
\end{equation}

Here, $m(x, y_{\mathrm{c}}, y_{\mathrm{r}})$ stands for the margin value between chosen and rejected responses. 
This formula helps the model to better understand which responses are preferred over others, based on the scores we gave them.

\section{Disentangling the Impact of Region Flooding from Label Augmentation}
\label{apdix:source_of_gains}

A central hypothesis of this work is that the \textit{Region Flooding Tuning} (RgFT) framework provides methodological benefits beyond the simple inclusion of absolute quality labels. To verify whether the observed performance gains stem from the architecture itself or merely from data augmentation, we conducted controlled experiments comparing OPRM-RgFT against standard classification paradigms trained on identical data (comprising both preference pairs and semantic quality labels).

We formulate two baseline approaches to isolate the contribution of the modeling strategy:
\begin{itemize}[label=\textbullet, leftmargin=*, topsep=0pt, itemsep=2pt]
    \item \textbf{Baseline A (Hard Classification):} A standard multi-class classification model optimizing for three discrete quality tiers (\textbf{good}, \textbf{normal}, \textbf{bad}). Inference is performed via maximum a posteriori estimation, selecting the class with the highest probability.
    \item \textbf{Baseline B (Scalar-Weighted Classification):} A regression-oriented approach designed to provide finer granularity than hard classification. Here, the reward score is computed as the expected value over class probabilities: $\mathbb{E}[s] = \sum_{c \in 
    \{\textbf{good}, \textbf{normal}, \textbf{bad}\}} P(c) \cdot v_c$, where $v_c$ represents the centroid of the corresponding semantic region (e.g., values mapped to 2, 5, and 8).
\end{itemize}

\begin{table}[h]
    \centering
    \caption{Comparative analysis of OPRM-RgFT against standard classification baselines utilizing identical supervision signals. The results demonstrate that the proposed region flooding mechanism yields significant gains over pure classification approaches.}
    \label{tab:source_of_gains}
    \vspace{0.2cm}
    \resizebox{0.95\textwidth}{!}{
        \begin{tabular}{lccccc}
            \toprule
            \textbf{Method} & \textbf{RewardBench} & \textbf{PPE-P} & \textbf{PPE-C} & \textbf{RMB} & \textbf{Overall} \\
            \midrule
            Baseline A (Hard Classification) & 71.0 & 43.9 & 46.6 & 55.7 & 54.3 \\
            Baseline B (Scalar-Weighted Classification) & 85.4 & 61.5 & 64.3 & 71.3 & 70.6 \\
            \cellcolor{gray!20}\textbf{OPRM-RgFT-32B (Ours)} & \cellcolor{gray!20}\textbf{88.9} & \cellcolor{gray!20}\textbf{64.6} & \cellcolor{gray!20}\textbf{67.3} & \cellcolor{gray!20}\textbf{74.8} & \cellcolor{gray!20}\textbf{73.9} \\
            \bottomrule
        \end{tabular}
    }
\end{table}

The empirical results, detailed in Table~\ref{tab:source_of_gains}, indicate a clear performance hierarchy. Hard Classification significantly outperforms Scalar-Weighted Classification, underscoring the necessity of continuous score representations for ranking tasks. However, it consistently underperforms OPRM-RgFT across all benchmarks (e.g., a $3.3\%$ deficit in the Overall score). This performance gap substantiates that the efficacy of our method is not solely attributable to the availability of absolute labels. Rather, the Region Flooding strategy effectively harmonizes discriminative pairwise ranking with absolute semantic constraints. Unlike standard classification, which treats labels as independent categories, RgFT anchors the preference distribution within a continuous ordinal space, thereby achieving superior calibration and ranking fidelity.

\section{OPRM's Prompt Templates}\label{apdix:prompt_temp}
As shown in Prompt~\ref{prompt:meta}, we present the prompt templates used during the OPRM training process. Furthermore, we designed distinct scoring criteria for general-domain and domain-specific data. Specifically, the general-domain preference data is evaluated using the OpenAI 3H Criteria, as detailed in Prompt~\ref{criteria:meta}. In contrast, the role-play data used in our experiments is assessed against our custom Role Play Criteria, presented in Prompt~\ref{criteria:rp}.

\subsection{Prompt}

\begin{prompt}{Ordinal Distributional Reward Modeling Prompt}\label{prompt:meta}
% \small
\#\# Evaluation Guidelines
\\
You are an expert text quality evaluator. 
\\
Your task is to carefully assess and score the AI-generated candidate response based on the specific criteria provided below.
\\
\\
\#\# Scoring Criteria
\\
\\
{\color{blue}\{criteria\}}
\\
\\
\#\# Context of Interaction
\\
The following represents a historical conversation between a user and an AI assistant:
\\
\\
{\color{blue}\{prompt\}}
\\
\\
\#\# AI Response to Evaluate
\\
Below is the candidate response generated by the AI that needs to be evaluated:
\\
\\
{\color{blue}\{response\}}
\\
\\
\#\# Evaluation Instructions
\\
Please score the response according to the criteria on a scale from 1 to 9, where:
\\
- 1-3: Bad performance
\\
- 4-6: Normal performance
\\
- 7-9: Good performance
\\ 
\\
Use the following format for your final assessment: “Score: <total points>”.
\\
\\
Score: 
\end{prompt}

\subsection{Criteria}

\begin{prompt}{Role Play Criteria}\label{criteria:rp}
% \small
\textbf{Relevant}: Responses should not be irrelevant. Responses should be coherent and reasonable within the given context, without containing logical flaws.
\\
\\
\textbf{Credible}: Responses should be credible, meet the prompt requirements and provide accurate information. In role-playing or storytelling, responses should be consistent with the given character and background.
\\
\\
\textbf{Interesting}: Responses should be engaging and able to capture the user's attention. Responses that are self-aware, thought-provoking, or captivating will be more interesting.
\end{prompt}

\begin{prompt}{Openai 3H Criteria}\label{criteria:meta}
% \small
\textbf{Helpfulness}: The content generated should aim to be of assistance to the user. It should provide accurate information, answer questions to the best of the model's knowledge, and offer solutions that are practical and relevant to the user's queries. The model should not generate content that is intentionally misleading or unhelpful.
\\
\\
\textbf{Truthfulness}: The model should strive to provide information that is truthful and based on the most accurate data available up to its knowledge cutoff. It should not fabricate information or present opinions as facts. When uncertain, the model should acknowledge the limitations of its knowledge.
\\
\\
\textbf{Harmlessness}: The content should not cause harm to individuals or groups. It should avoid generating content that is offensive, discriminatory, or promotes harmful behaviors. The model should be programmed to refuse to generate content that could be used for malicious purposes, including but not limited to generating false information, engaging in deception, or promoting illegal activities.
\end{prompt}

\section{Detailed Experiment Results}\label{apdix:detail_exp}
We report the detailed per-subset experiment results on RewardBench~\ref{tab:rb-results-detail}, PPE Correctness~\ref{tab:ppe-c-results-detail}, and RMB~\ref{tab:rmb-results-detail}. The results for the baseline methods are sourced from their original papers.

\begin{table}[h]
	\centering
	% \vspace{-1.0cm}
	\caption{Detailed results of different methods on the Reward Bench benchmark.}
	\resizebox{\textwidth}{!}{
		\begin{tabular}{lccccc}
		\toprule
		\textbf{Method} & \textbf{Chat} & \textbf{Chat Hard} & \textbf{Safety}  &  \textbf{Reasoning} & \textbf{Reward Bench} \\
		\midrule\midrule
		\multicolumn{6}{c}{\textit{Reported Results of Public Models}} \\
		Skywork-Reward-Gemma-2-27B & 95.8 & 91.4 & 91.9 & 96.1 & \cellcolor{gray!20}93.8 \\
		DeepSeek-V2.5-0905 & - & - & - & - & \cellcolor{gray!20}81.5 \\
		Gemini-1.5-Pro & 94.1 & 77.0 & 85.8 & 90.2 & \cellcolor{gray!20}86.8 \\
		ArmoRM-8B-v0.1 & 96.9 & 76.8 & 90.5 & 97.3 & \cellcolor{gray!20}90.4 \\
		InternLM2-20B-Reward & 98.9 & 76.5 & 89.5 & 95.8 & \cellcolor{gray!20}90.2 \\
		LLaMA-3.1-70b-Instruct & 97.2 & 70.2 & 82.8 & 86.0 & \cellcolor{gray!20}84.1 \\
		Claude-3.5-sonnet & 96.4 & 74.0 & 81.6 & 84.7 & \cellcolor{gray!20}84.2 \\
		Nemotron-4-340B-Reward & 95.8 & 87.1 & 91.5 & 93.6 & \cellcolor{gray!20}92.0 \\
		GPT-4o & 96.1 & 76.1 & 88.1 & 86.6 & \cellcolor{gray!20}86.7 \\
		\midrule
		\multicolumn{6}{c}{\textit{Reproduced Results of Baseline Methods From DeepSeek}} \\
		LLM-as-a-Judge & 96.7 & 69.3 & 83.5 & 84.3 & \cellcolor{gray!20}83.4 \\
		DeepSeek-BTRM-27B & 96.7 & 86.2 & 75.7 & 89.8 & \cellcolor{gray!20}81.7 \\
		CLoud-Gemma-2-27B & 96.7 & 69.3 & 83.5 & 84.3 & \cellcolor{gray!20}82.0 \\
		DeepSeek-PairRM-27B & 95.5 & 86.8 & 52.3 & 92.0 & \cellcolor{gray!20}87.1 \\
		DeepSeek-GRM-27B-RFT & 94.7 & 77.2 & 87.0 & 79.2 & \cellcolor{gray!20}84.5 \\
		DeepSeek-GRM-27B & 94.1 & 78.3 & 88.0 & 83.8 & \cellcolor{gray!20}86.0 \\
		\midrule
		\multicolumn{6}{c}{\textit{Results of Our Method}} \\
		\textbf{OPRM-Qwen2.5-7B} & 96.4 & 76.3 & 86.2 & 92.2 & \cellcolor{gray!20}87.8 \\
		\textbf{OPRM-Qwen2.5-14B} & 96.6 & 78.1 & 86.1 & 96.2 & \cellcolor{gray!20}89.3 \\
		\textbf{OPRM-Qwen2.5-32B} & 96.9 & 81.8 & 89.6 & 96.7 & \cellcolor{gray!20}91.3 \\
		\textbf{OPRM-Qwen2.5-72B} & 96.4 & 79.6 & 88.1 & 93.0 & \cellcolor{gray!20}89.3 \\
		\midrule
		\multicolumn{6}{c}{\textit{Results of Our Method (w/ Region Flooding Tuning)}} \\
		\textbf{OPRM-RgFT-Qwen2.5-7B} & 95.5 & 76.5 & 86.4 & 86.5 & \cellcolor{gray!20}86.2 \\
		\textbf{OPRM-RgFT-Qwen2.5-14B} & 96.9 & 79.4 & 88.1 & 84.6 & \cellcolor{gray!20}87.3 \\
		\textbf{OPRM-RgFT-Qwen2.5-32B} & 95.3 & 82.7 & 89.2 & 88.4 & \cellcolor{gray!20}88.9 \\
		\textbf{OPRM-RgFT-Qwen2.5-72B} & 96.9 & 82.7 & 89.7 & 87.1 & \cellcolor{gray!20}89.1 \\
		\bottomrule
		\end{tabular}
	}
	\label{tab:rb-results-detail}
	\vspace{-1em}
\end{table}
\begin{table}[t]
	\centering
	\caption{Detailed results of different methods on the PPE Correctness benchmark.}
	\resizebox{\textwidth}{!}{
		\begin{tabular}{lcccccc}
		\toprule
		\textbf{Method} & \textbf{MMLU-Pro} & \textbf{MATH} & \textbf{GPQA} & \textbf{MBPP-Plus}  &  \textbf{IFEval}  & \textbf{PPE Correctness} \\
		\midrule\midrule
		\multicolumn{7}{c}{\textit{Reported Results of Public Models}} \\
		Skywork-Reward-Gemma-2-27B & 54.0 & 63.0 & 53.0 & 59.0 & 54.0 & \cellcolor{gray!20}56.6 \\
		DeepSeek-V2.5-0905 & - & - & - & - & - & \cellcolor{gray!20}58.5 \\
		Gemini-1.5-Pro & - & - & - & - & - & \cellcolor{gray!20}59.8 \\
		ArmoRM-8B-v0.1 & 66.0 & 71.0 & 57.0 & 54.0 & 58.0 & \cellcolor{gray!20}61.2 \\
		InternLM2-20B-Reward & 68.0 & 70.0 & 57.0 & 58.0 & 62.0 & \cellcolor{gray!20}63.0 \\
		LLaMA-3.1-70b-Instruct & - & - & - & - & - & \cellcolor{gray!20}59.2 \\
		Claude-3.5-sonnet & 66.0 & 63.0 & 56.0 & 52.0 & 57.0 & \cellcolor{gray!20}58.8 \\
		Nemotron-4-340B-Reward & 70.0 & 65.0 & 57.0 & 49.0 & 63.0 & \cellcolor{gray!20}60.8 \\
		GPT-4o & - & - & - & - & - & \cellcolor{gray!20}57.6 \\
		\midrule
		\multicolumn{7}{c}{\textit{Reproduced Results of Baseline Methods From DeepSeek}} \\
		LLM-as-a-Judge & 66.0 & 68.0 & 52.8 & 50.2 & 56.8 & \cellcolor{gray!20}58.8 \\
		DeepSeek-BTRM-27B & 68.8 & 73.2 & 56.8 & 68.8 & 66.0 & \cellcolor{gray!20}66.7 \\
		CLoud-Gemma-2-27B & 68.7 & 68.8 & 53.5 & 59.0 & 62.0 & \cellcolor{gray!20}62.4 \\
		DeepSeek-PairRM-27B & 68.3 & 74.7 & 55.0 & 63.1 & 62.9 & \cellcolor{gray!20}64.8 \\
		DeepSeek-GRM-27B-RFT & 64.8 & 68.7 & 55.5 & 49.0 & 60.2 & \cellcolor{gray!20}59.6 \\
		DeepSeek-GRM-27B & 64.8 & 68.8 & 55.6  & 50.1 & 59.8 & \cellcolor{gray!20}59.8 \\
		\midrule
		\multicolumn{7}{c}{\textit{Results of Our Method}} \\
		\textbf{OPRM-Qwen2.5-7B} & 65.2 & 70.1 & 56.3 & 59.0 & 56.1 & \cellcolor{gray!20}61.3 \\
		\textbf{OPRM-Qwen2.5-14B} & 66.7 & 70.7 & 57.1 & 67.4 & 59.5 & \cellcolor{gray!20}64.3 \\
		\textbf{OPRM-Qwen2.5-32B} & 71.2 & 73.2 & 57.9 & 66.2 & 62.2 & \cellcolor{gray!20}66.1 \\
		\textbf{OPRM-Qwen2.5-72B} & 73.4 & 75.9 & 58.6 & 54.1 & 59.5 & \cellcolor{gray!20}64.3 \\
		\midrule
		\multicolumn{7}{c}{\textit{Results of Our Method (w/ Region Flooding Tuning)}} \\
		\textbf{OPRM-RgFT-Qwen2.5-7B} & 64.8 & 71.6 & 55.9 & 63.0 & 56.8 & \cellcolor{gray!20}62.4 \\
		\textbf{OPRM-RgFT-Qwen2.5-14B} & 69.5 & 74.0 & 57.3 & 67.0 & 60.0 & \cellcolor{gray!20}65.6 \\
		\textbf{OPRM-RgFT-Qwen2.5-32B} & 73.3 & 76.8 & 58.5 & 67.2 & 60.6 & \cellcolor{gray!20}67.3 \\
		\textbf{OPRM-RgFT-Qwen2.5-72B} & 72.8 & 77.1 & 59.0 & 62.0 & 61.2 & \cellcolor{gray!20}66.4 \\
		\bottomrule
		\end{tabular}
	}
	\label{tab:ppe-c-results-detail}
	\vspace{-1em}
\end{table}
\begin{table}[t]
	\centering
	\caption{Detailed results of different methods on the RMB benchmark.}
	\resizebox{\textwidth}{!}{
		\begin{tabular}{lccccc}
		\toprule
		\textbf{Method}   & \textbf{Helpfulness BoN} & \textbf{Helpfulness Pair} & \textbf{Harmlessness BoN} & \textbf{Harmlessness Pair}  &  \textbf{RMB} \\
		\midrule\midrule
		\multicolumn{6}{c}{\textit{Reported Results of Public Models}} \\
		Skywork-Reward-Gemma-2-27B & 47.2 & 65.3 & 56.1 & 72.1 & \cellcolor{gray!20}60.2 \\
		DeepSeek-V2.5-0905 & - & - & - & - & \cellcolor{gray!20}65.7 \\
		Gemini-1.5-Pro & 53.6 & 76.3 & 29.9 & 66.1 & \cellcolor{gray!20}56.5 \\
		ArmoRM-8B-v0.1 & 63.6 & 78.7 & 49.7 & 66.3 & \cellcolor{gray!20}64.6 \\
		InternLM2-20B-Reward & 58.5 & 76.3 & 49.9 & 67.0 & \cellcolor{gray!20}62.9 \\
		LLaMA-3.1-70b-Instruct & 64.8 & 81.1 & 55.8 & 73.9 & \cellcolor{gray!20}68.9 \\
		Claude-3.5-sonnet & 70.5 & 83.8 & 51.8 & 76.4 & \cellcolor{gray!20}70.6 \\
		Nemotron-4-340B-Reward & - & - & - & - & \cellcolor{gray!20}69.9 \\
		GPT-4o & 63.9 & 81.5 & 68.2 & 81.4 & \cellcolor{gray!20}73.8 \\
		\midrule
		\multicolumn{6}{c}{\textit{Reproduced Results of Baseline Methods From DeepSeek}} \\
		LLM-as-a-Judge & 55.8 & 78.5 & 50.8 & 73.9 & \cellcolor{gray!20}64.8 \\
		DeepSeek-BTRM-27B & 64.0 & 83.0 & 33.6 & 51.0 & \cellcolor{gray!20}57.9 \\
		CLoud-Gemma-2-27B & 64.7 & 81.1 & 41.7 & 66.1 & \cellcolor{gray!20}63.4 \\
		DeepSeek-PairRM-27B & 59.9 & 83.3 & 34.1 & 55.5 & \cellcolor{gray!20}58.2 \\
		DeepSeek-GRM-27B-RFT & 58.4 & 79.3 & 54.2 & 76.0 & \cellcolor{gray!20}67.0 \\
		DeepSeek-GRM-27B & 62.3 & 80.5 & 57.0 & 76.1 & \cellcolor{gray!20}69.0 \\
		\midrule
		\multicolumn{6}{c}{\textit{Results of Our Method}} \\
		\textbf{OPRM-Qwen2.5-7B} & 63.1 & 78.4 & 65.7 & 78.8 & \cellcolor{gray!20}71.5 \\
		\textbf{OPRM-Qwen2.5-14B} & 65.8 & 80.7 & 68.2 & 80.5 & \cellcolor{gray!20}73.8 \\
		\textbf{OPRM-Qwen2.5-32B} & 69.2 & 82.1 & 68.9 & 82.0 & \cellcolor{gray!20}75.6 \\
		\textbf{OPRM-Qwen2.5-72B} & 68.7 & 82.4 & 64.2 & 78.5 & \cellcolor{gray!20}73.5 \\
		\midrule
		\multicolumn{6}{c}{\textit{Results of Our Method (w/ Region Flooding Tuning)}} \\
		\textbf{OPRM-RgFT-Qwen2.5-7B} & 63.4 & 79.0 & 62.4 & 75.6 & \cellcolor{gray!20}70.1 \\
		\textbf{OPRM-RgFT-Qwen2.5-14B} & 66.3 & 81.3 & 65.3 & 78.2 & \cellcolor{gray!20}72.8 \\
		\textbf{OPRM-RgFT-Qwen2.5-32B} & 67.6 & 81.4 & 69.1 & 81.2 & \cellcolor{gray!20}74.8 \\
		\textbf{OPRM-RgFT-Qwen2.5-72B} & 67.9 & 82.3 & 67.1 & 79.5 & \cellcolor{gray!20}74.2 \\
		\bottomrule
		\end{tabular}
	}
	\label{tab:rmb-results-detail}
	\vspace{-1em}
\end{table}

\section{Quality-Level Annotation}\label{apdix:reg_anno}
\subsection{General-Domain Data}\label{apdix:reg_anno_general}
% [Annotation]
For the acquisition of quality-level annotations, we follow the methodology of UltraFeedback~\citep{cui-2023-ultrafeedback}. The process involves two main steps. First, we employ the gpt-4o model~\citep{hurst-2024-gpt} to annotate each prompt-response pair with fine-grained scores across multiple dimensions, such as instruction-following, truthfulness, and helpfulness. Second, these scores are averaged, and the resulting value is mapped to one of our three predefined quality levels—\textcolor{good}{\textbf{good}}, \textcolor{normal}{\textbf{normal}}, or \textcolor{bad}{\textbf{bad}}—based on specific score intervals.

Following this automatic annotation, we perform a manual verification step. For verifiable tasks, such as mathematics and coding, we check the responses against the ground truth. If a response is found to be incorrect, its quality level is manually downgraded to \textcolor{bad}{\textbf{bad}}. We acknowledge that for more subjective tasks, this per-instance verification is not always feasible, which may introduce some annotation noise. 

Finally, to ensure logical consistency, we filter out all pairs where the chosen response is not strictly better than the rejected one. This includes invalid combinations of <$l_\text{chosen}, l_\text{rejected}$> such as <\textcolor{normal}{\textbf{normal}}, \textcolor{good}{\textbf{good}}>, <\textcolor{bad}{\textbf{bad}}, \textcolor{normal}{\textbf{normal}}>, and <\textcolor{bad}{\textbf{bad}}, \textcolor{good}{\textbf{good}}>. The remaining data constitutes our final training set.

\subsection{Role-Play Data}\label{apdix:reg_anno_specific}
For the domain-specific data, we employe a team of six human experts to perform accurate quality-level annotation. The data consists of Role-Play Dialogues, for which the experts assessed each prompt-response pair against four core dimensions: \textbf{Core Role-Playing Consistency}, \textbf{Interactivity \& Narrative Progression}, \textbf{Fundamental Linguistic Quality}, and \textbf{Immersion}. Based on this multi-dimensional evaluation, they directly assigned a quality level of \textcolor{good}{\textbf{good}}, \textcolor{normal}{\textbf{normal}}, or \textcolor{bad}{\textbf{bad}} to each prompt-response pair. 
To ensure high annotation quality and consistency, a label was only accepted if at least two experts reached a consensus. We consider this high-fidelity annotation process to be crucial for fully leveraging the capabilities of our Region Flooding Tuning method.

\subsection{Distinguishing Annotation Error from Disagreement}
\label{apdix:inconsistency_vs_disagreement}

Our methodology emphasizes robustness to inconsistent preference data and the ability to handle annotation disagreement. To provide clarity on our data processing pipeline, it is crucial to distinguish between two fundamentally different types of label conflicts: \textbf{Annotation Errors} (Logical Inconsistency) and \textbf{Annotation Disagreements} (Subjective Ambiguity).

\paragraph{Annotation Errors (Logical Inconsistency).}
Our training paradigm relies on preference pairs $(y_\text{c}, y_\text{r})$ where the chosen response $y_\text{c}$ is preferred over the rejected response $y_\text{r}$. A semantic label configuration such as <\textbf{normal}, \textbf{good}> (where the chosen response has a lower semantic quality than the rejected one) constitutes a direct violation of the preference premise ($y_\text{c} \succ y_\text{r}$). We classify such cases as logical inconsistencies or annotation errors rather than subjective disagreements. Empirical analysis of our dataset reveals that these contradictions are extremely rare, accounting for less than $0.1\%$ of the total samples. Consequently, our decision to filter these instances represents a standard data preprocessing aimed at eliminating verifiable noise, rather than an evasion of challenges.

\paragraph{Annotation Disagreements (Subjective Ambiguity).}
In contrast, the disagreement that our probabilistic framework aims to model refers to valid variations in semantic judgment where the core preference relationship remains intact. For instance, given a valid preference pair ($y_\text{c} \succ y_\text{r}$), one annotator might assign the labels <\textbf{good}, \textbf{normal}>, while another might assign <\textbf{good}, \textbf{bad}>. Both annotations respect the ordinal constraint ($y_\text{c}$ is superior to $y_\text{r}$) but differ in the perceived margin of quality. This type of variation reflects genuine subjective ambiguity in reward assessment. Unlike point-estimate models that are forced to regress to a single mean value, OPRM is specifically designed to capture this aleatoric uncertainty by learning a full probability distribution over the reward space.

In summary, the filtering step mentioned in prior sections is strictly limited to removing logical contradictions that violate the definition of a preference pair. It does not compromise our claim of robustness; rather, it ensures that the model focuses on learning meaningful distributional patterns from legitimate subjective variations.

\section{Future Work}
In this section, we outline several promising directions for future research that build upon the OPRM framework, such as customized Region Flooding Tuning method and customized decoding method.

\subsection{Customized Region Flooding Tuning}\label{apdix:cust_region_flooding}

\begin{figure*}[h]
    \centering
    \includegraphics[width=\linewidth]{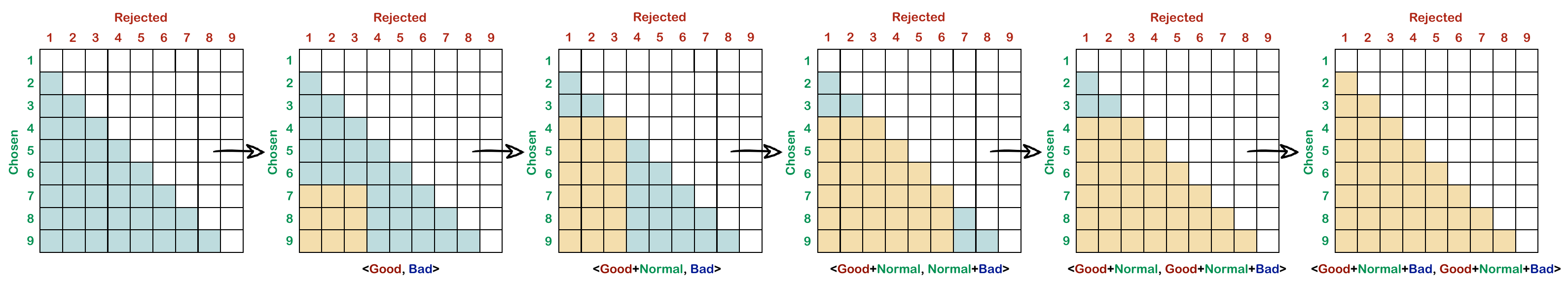}
    \vspace{-0.55cm}
    \caption{\textbf{Annotation Region Flooding Tuning}. As annotation ambiguity increases, the target optimization region "floods" to encompass a wider set of plausible outcomes. A more uncertain annotation results in a larger target region than a more certain one.}
\label{fig:anno_region_flooding_tuning}
\vspace{-0.3cm}
\end{figure*}
As discussed in Section~\ref{sec:reg_flooding_tuning}, a core advantage of Region Flooding Tuning is its customizability. In this section, we demonstrate a novel application of this feature.
Annotator inconsistency is a well-known challenge in preference data collection, an issue that is exacerbated at finer annotation granularities and leads to increased label ambiguity. To address this, we propose a method that explicitly models this ambiguity by permitting a single prompt-response pair to be associated with multiple potential quality levels. For instance, a response might be considered both \textcolor{good}{\textbf{good}} and \textcolor{normal}{\textbf{normal}}, or even all three levels in cases of extreme uncertainty. As illustrated in Figure~\ref{fig:anno_region_flooding_tuning}, our approach handles this by optimizing over an expanded set of joint probabilities, corresponding to all plausible quality-level assignments for a given pair.

\subsection{Customized Decoding Method}\label{apdix:cust_decoding_method}

The full probability distribution $p_\psi(s \mid x, y)$ produced by OPRM allows us to go beyond a simple expected score. To fully leverage this rich distributional information, we introduce \textbf{Uncertainty-aware Decoding}. This method adjusts the expected score by penalizing predictive uncertainty, thereby favoring responses that are predicted to be high-quality with high confidence. The final reward score $r_\psi(x,y)$ is calculated as:

\begin{equation} \label{eq:uncertainty_decoding}
r_\psi(x,y) = \underbrace{\sum_{s=a}^{b} s \cdot p_\psi(s \mid x, y)}_{\text{Expected Score}} - \underbrace{\lambda \cdot u(x, y)}_{\text{Uncertainty Term}}
\end{equation}

where the first term is the standard expected score. The second term, $u(x, y)$, is an uncertainty measure of the distribution, such as its \textbf{Shannon entropy} or \textbf{variance}. The hyperparameter $\lambda \ge 0$ controls the strength of the uncertainty penalty.

\section{Computational Overhead Analysis}
\label{apdix:computational_overhead}
A key consideration for advanced reward modeling frameworks is the potential trade-off between performance gains and computational costs. To rigorously evaluate this, we conducted a systematic benchmark comparing the computational overhead of OPRM against a standard Bradley-Terry model under identical experimental conditions. Both models employ the Qwen2.5-32B architecture as the backbone. Benchmarks were executed using 32$\times$NVIDIA H200 GPUs for training and 4$\times$H200 GPUs for inference, with results reported as the average wall-clock time over three independent runs.

\begin{table}[h]
    \centering
    \caption{Computational efficiency comparison between BTRM and OPRM.}
    \label{tab:efficiency_benchmark}
    \resizebox{0.95\textwidth}{!}{
        \begin{tabular}{lcccc}
            \toprule
            \textbf{Method} & \textbf{Training Time} & \textbf{Inference Time} & \textbf{Hardware (Train)} & \textbf{Hardware (Infer)} \\
            \midrule
            BTRM (Standard DRM) & 7.3h & 5.5 min & $32 \times \text{H200}$ & $4 \times \text{H200}$ \\
            \cellcolor{gray!20}\textbf{OPRM (Ours)} & \cellcolor{gray!20}\textbf{6.9h} & \cellcolor{gray!20}\textbf{2.1 min} & \cellcolor{gray!20}$32 \times \text{H200}$ & \cellcolor{gray!20}$4 \times \text{H200}$ \\
            \bottomrule
        \end{tabular}
    }
\end{table}

As summarized in Table~\ref{tab:efficiency_benchmark}, OPRM exhibits training efficiency comparable to, and slightly superior to, the BTRM baseline ($6.9$ hours vs. $7.3$ hours). This result empirically confirms that our ordinal preference modeling approach captures richer distributional information without imposing additional computational burdens during the training phase.

More notably, OPRM demonstrates a substantial advantage in inference latency, achieving a $\mathbf{2.6\times}$ speedup over BTRM ($2.1$ min vs. $5.5$ min). This efficiency gain stems directly from OPRM's architectural design: by eliminating the need for an external scalar value head, OPRM remains strictly within the standard vocabulary space of the LLM. Consequently, it is natively compatible with highly optimized inference engines (e.g., vLLM, SGLang) without requiring custom kernel modifications or specialized implementations. This characteristic renders OPRM particularly suitable for high-throughput applications, such as online RLHF and large-scale preference ranking tasks like Best-of-N sampling, where inference speed is a critical bottleneck.

\section{Case Study}\label{apdix:case_study}
We present several case studies in Table~\ref{case-1}, Table~\ref{case-2}, and Table~\ref{case-3}, along with the scoring results from BTRM, OPRM, and OPRM-RgFT for comparative analysis.
A key finding is that by being grounded in absolute quality assessment, OPRM and OPRM-RgFT exhibit a reduced susceptibility to length bias, a known issue in purely relative models like BTRM. 
Moreover, for tasks with verifiable answers, our models consistently prioritize factual correctness, whereas BTRM can be misled by other stylistic attributes.

\begin{table}[h]
\footnotesize
    \centering
    \caption{
    \textbf{Case 1}}
    \begin{tabular}{|p{1.0\textwidth}|}
    \toprule

\textbf{Example} \\ \midrule
\multicolumn{1}{|p{1.0\textwidth}|}{
{\textbf{Question}: You are a highly skilled mathematician tasked with solving complex math problems. Your goal is to provide clear, step-by-step solutions that can be easily parsed and evaluated.

Here is the math problem you need to solve:

<problem>
The length of a rectangle is twice its width. Given the length of the diagonal is $5\sqrt{5}$, find the area of the rectangle.
</problem>

Box your final answer using LaTeX, for example: \$x = \textbackslash boxed\{[Your final numerical or algebraic answer]\}\$.

Now, please solve the given math problem and provide your solution in the specified format.
} 
} \\
\midrule
{\textbf{Chosen Response}: Here's my solution to the problem:

Let the width of the rectangle be \textit{w}. Since the length is twice the width, the length of the rectangle is 2\textit{w}.

We are given that the length of the diagonal is $5 \ \sqrt{5}$. Using the Pythagorean theorem, we can write: 
$\left(5\sqrt{5}\right)^2 = w^2 + \left(2w\right)^2$

Simplifying the equation, we get:
$25 \cdot 5 = w^2 + 4w^2$

Combine like terms:
$25 \cdot 5 = 5w^2$

Divide both sides by $5$:
$25 = w^2$

Take the square root of both sides:
$w = \sqrt{25} = 5$

Now that we have the width, we can find the length:
$2w = 2 \cdot 5 = 10$

The area of the rectangle is the product of the width and length:
$A = w \cdot 2w = 5 \cdot 10 = \boxed{50}$

Therefore, the area of the rectangle is $50$.
}\\
\midrule
{\textbf{Rejected Response}: To solve this problem, we can start by using the Pythagorean theorem to relate the length and width of the rectangle to the length of the diagonal:
$d^2 = l^2 + w^2$

where $d$ is the length of the diagonal, $l$ is the length of the rectangle, and $w$ is the width of the rectangle. Since the length of the rectangle is twice its width, we can write:
$l = 2w$

Substituting this into the Pythagorean theorem, we get:
$d^2 = (2w)^2 + w^2$

Simplifying this expression, we get:
$d^2 = 5w^2$

Since we are given that the length of the diagonal is $5\sqrt{5}$, we can set up the equation:
$d = 5\sqrt{5}$

Substituting this into the equation above, we get:
$5\sqrt{5}^2 = 5w^2$

Simplifying this expression, we get:
$25\cdot 5 = 5w^2$

Simplifying further, we get:
$125 = w^2$

Taking the square root of both sides, we get:
$w = \sqrt{125} = 5\sqrt{5}$

Now that we have found the width, we can find the length by substituting this value back into the equation:
$l = 2w = 2(5\sqrt{5}) = 10\sqrt{5}$

The area of the rectangle is given by:
$A = lw = (10\sqrt{5})(5\sqrt{5}) = 50\sqrt{5}^2 = 50\cdot 5 = \boxed{250}$

Final Answer: The final answer is $250$.
}\\
\midrule
{\textbf{BTRM-Qwen2.5-32B}}
\\
{Chosen Score: {\color{red}0.03}}
\\
{Rejected Score: {\color{blue}-2.70}}
\\
\midrule
{\textbf{OPRM-Qwen2.5-32B (Ours)}}
\\
{Chosen Score: {\color{red}7.04}}
\\
{Rejected Score: {\color{blue}3.63}}
\\
\midrule
{\textbf{OPRM-RgFT-Qwen2.5-32B (Ours)}}
\\
{Chosen Score: {\color{red}8.06}}
\\
{Rejected Score: {\color{blue}3.13}}
\\
\bottomrule
\end{tabular}
\label{case-1}
\end{table}
\begin{table}[ht]
\footnotesize
    \centering
    \caption{\textbf{Case 2}}
    \begin{tabular}{|p{1.0\textwidth}|}
    \toprule

\textbf{Example} \\ \midrule
\multicolumn{1}{|p{1.0\textwidth}|}{
{\textbf{Question}: is $(x^y)^z = (x^z)^y$?
}
} \\
\midrule
{\textbf{Chosen Response}: Yes, the equation $(x^y)^z = (x^z)^y$ is indeed true, and it is a direct consequence of the properties of exponents in mathematics.

To see why this is true, let's consider the equation on both sides:

Left side: $(x^y)^z$

Right side: $(x^z)^y$

First, we can simplify the left side by applying the power of a power rule, which states that $(a^m)^n = a^{(mn)}$:

Left side: $(x^y)^z = x^{(yz)}$

Now, let's simplify the right side using the same power of a power rule:

Right side: $(x^z)^y = x^{(zy)}$

Now we can see that both sides of the equation are equal:

$x^{(yz)} = x^{(zy)}$

Since the bases $(x)$ are the same and the exponents $(yz)$ and $(zy)$ are also the same, we can conclude that the equation $(x^y)^z = (x^z)^y$ is true. This property holds for any real numbers $x$, $y$, and $z$, as long as $x \neq 0$ and $y$ and $z$ are non-zero.
}\\
\midrule
{\textbf{Rejected Response}: A classic question about exponentiation!

Unfortunately, the answer is no, $(x^y)^z$ is not necessarily equal to $(x^z)^y$ in general.

To see why, let's break it down:

$(x^y)^z$ means ``raise $x$ to the power of $y$, and then raise the result to the power of $z$''. This can be written as:

$(x^y)^z = x^{(y*z)}$

On the other hand, $(x^z)^y$ means ``raise $x$ to the power of $z$, and then raise the result to the power of $y$''. This can be written as:

$(x^z)^y = x^{(z*y)}$

As you can see, the two expressions are not equal in general, because $y*z$ is not necessarily equal to $z*y$. For example, if $x = 2$, $y = 3$, and $z = 4$, then:

$(2^3)^4 = 2^{(3*4)} = 2^{12} = 4096$

while

$(2^4)^3 = 2^{(4*3)} = 2^{12} = 4096$

But in this case, it just so happens that $y*z = z*y$, so the two expressions are equal. However, this is not always the case.

For example, if $x = 2$, $y = 2$, and $z = 3$, then:

$(2^2)^3 = 2^{(2*3)} = 2^6 = 64$

while

$(2^3)^2 = 2^{(3*2)} = 2^6 = 64$

Again, it just so happens that the two expressions are equal in this case, but this is not a general rule.

So, to summarize: $(x^y)^z$ is not necessarily equal to $(x^z)^y$, unless $y$ and $z$ commute, i.e., $y*z = z*y$.
}\\
\midrule
{\textbf{BTRM-Qwen2.5-32B}}
\\
{Chosen Score: {\color{red}-0.96}}
\\
{Rejected Score: {\color{blue}-0.39}}
\\
\midrule
{\textbf{OPRM-Qwen2.5-32B (Ours)}}
\\
{Chosen Score: {\color{red}7.85}}
\\
{Rejected Score: {\color{blue}3.84}}
\\
\midrule
{\textbf{OPRM-RgFT-Qwen2.5-32B (Ours)}}
\\
{Chosen Score: {\color{red}8.03}}
\\
{Rejected Score: {\color{blue}2.05}}
\\
\bottomrule
\end{tabular}
\label{case-2}
\end{table}
\begin{table}[ht]
\footnotesize
    \centering
    \caption{
    \textbf{Case 3}} 
    \begin{tabular}{|p{1.0\textwidth}|}
    \toprule

\textbf{Example} \\ \midrule
\multicolumn{1}{|p{1.0\textwidth}|}{
{\textbf{Question}: Anita \_\_\_ some application letters to the company but they never call her.

A.has send

B.have sent

C.have send

D.have sending

E.has sent
}
} \\
\midrule
{\textbf{Chosen Response}: The correct answer is E. "has sent".

Explanation:

In this sentence, we need to use the present perfect tense, which is formed by "has/have + past participle". The past participle of the verb "send" is "sent".

A. "has send" is incorrect because "send" is the base form of the verb, not the past participle.

B. "have sent" is incorrect because the subject "Anita" is singular, so we should use "has" instead of "have".

C. "have send" is incorrect for two reasons: "have" is used with a plural subject, and "send" is not the past participle.

D. "have sending" is incorrect because "sending" is the present participle, not the past participle.

E. "has sent" is the correct form, using "has" for the singular subject "Anita" and the past participle "sent".

Therefore, the correct sentence should be: "Anita has sent some application letters to the company, but they never call her."
}\\
\midrule
{\textbf{Rejected Response}: The correct option is E. "has sent."

The sentence should read: "Anita has sent some application letters to the company but they never call her."

The verb "has sent" is used because the subject of the sentence, "Anita," is singular and the action of sending the letters happened in the past. The present perfect tense is used to indicate that the action was completed in the past, but its effects are still being felt in the present. Therefore, "has sent" is the appropriate form of the verb to use in this sentence.
}\\
\midrule
{\textbf{BTRM-Qwen2.5-32B}}
\\
{Chosen Score: {\color{red}0.24}}
\\
{Rejected Score: {\color{blue}-1.12}}
\\
\midrule
{\textbf{OPRM-Qwen2.5-32B (Ours)}}
\\
{Chosen Score: {\color{red}8.08}}
\\
{Rejected Score: {\color{blue}6.76}}
\\
\midrule
{\textbf{OPRM-RgFT-Qwen2.5-32B (Ours)}}
\\
{Chosen Score: {\color{red}8.81}}
\\
{Rejected Score: {\color{blue}8.14}}
\\
\bottomrule
\end{tabular}
\label{case-3}
\end{table}

\end{document}